\documentclass{article}

% if you need to pass options to natbib, use, e.g.:
%     \PassOptionsToPackage{numbers, compress}{natbib}
% before loading neurips_2023

% ready for submission
% \usepackage{neurips_2023}
\usepackage[final]{neurips_2024}

% to compile a preprint version, e.g., for submission to arXiv, add add the
% [preprint] option:
    % \usepackage[preprint]{neurips_2023}

% to compile a camera-ready version, add the [final] option, e.g.:
%     \usepackage[final]{neurips_2023}

% \usepackage[final]{neurips_2023}

% to avoid loading the natbib package, add option nonatbib:
%    \usepackage[nonatbib]{neurips_2023}

\usepackage[utf8]{inputenc} % allow utf-8 input
\usepackage[T1]{fontenc}    % use 8-bit T1 fonts
\usepackage{hyperref}       % hyperlinks
\usepackage{url}            % simple URL typesetting
\usepackage{booktabs}       % professional-quality tables
\usepackage{amsfonts}       % blackboard math symbols
\usepackage{nicefrac}       % compact symbols for 1/2, etc.
\usepackage{microtype}      % microtypography
\usepackage{xcolor}         % colors
\usepackage{multirow}
\usepackage{amsmath}
\usepackage[ruled]{algorithm2e} % Required for the algorithm environment
\usepackage{graphicx} % Required for inserting images
\usepackage{tcolorbox}
\usepackage{caption}
\usepackage{menukeys}
\usepackage{amsthm}
\usepackage{tabularx} % Import tabularx for tables that can adjust their width automaticallyu
\usepackage{wrapfig,lipsum,booktabs}
\usepackage{enumitem}

\usepackage{array}
\usepackage{amssymb}

\newcolumntype{M}[1]{>{\centering\arraybackslash}p{#1}}

\newcommand{\model}[0]{\textsc{AlphaLLM}}
\newcommand{\emcts}[0]{$\eta$\textsc{Mcts}}
\newcommand{\prm}[0]{\texttt{PRM}}
\newcommand{\orm}[0]{\texttt{ORM}}

\newcommand{\ie}[0]{\emph{i.e., }}

\newcommand{\eg}[0]{\emph{e.g., }}

\newcommand{\RN}[1]{%
	\textup{\lowercase\expandafter{\it \romannumeral#1}}%
}
\newcommand{\enterkey}[0]{{\scriptsize{\keys{\return}}}}

%%%%% NEW MATH DEFINITIONS %%%%%

\usepackage{amsmath,amsfonts,bm}

% Mark sections of captions for referring to divisions of figures

% Highlight a newly defined term

% Figure reference, lower-case.

% Figure reference, capital. For start of sentence

% Section reference, lower-case.

% Section reference, capital.

% Reference to two sections.

% Reference to three sections.

% Reference to an equation, lower-case.
\def\eqref#1{equation~\ref{#1}}
% Reference to an equation, upper case

% A raw reference to an equation---avoid using if possible

% Reference to a chapter, lower-case.

% Reference to an equation, upper case.

% Reference to a range of chapters

% Reference to an algorithm, lower-case.

% Reference to an algorithm, upper case.
\def\Algref#1{Algorithm~\ref{#1}}

% Reference to a part, lower case

% Reference to a part, upper case

\def\1{\bm{1}}

% Random variables

% rm is already a command, just don't name any random variables m

% Random vectors

% Elements of random vectors

% Random matrices

% Elements of random matrices

% Vectors

\def\vtheta{{\bm{\theta}}}
\def\vtau{{\bm{\tau}}}

\def\vo{{\bm{o}}}

\def\vs{{\bm{s}}}

\def\vx{{\bm{x}}}
\def\vy{{\bm{y}}}

% Elements of vectors

% Matrix

% Tensor
\DeclareMathAlphabet{\mathsfit}{\encodingdefault}{\sfdefault}{m}{sl}
\SetMathAlphabet{\mathsfit}{bold}{\encodingdefault}{\sfdefault}{bx}{n}

% Graph
\def\gA{{\mathcal{A}}}

\def\gD{{\mathcal{D}}}

\def\gI{{\mathcal{I}}}

\def\gL{{\mathcal{L}}}

\def\gS{{\mathcal{S}}}

% Sets

% Don't use a set called E, because this would be the same as our symbol
% for expectation.

\def\sE{{\mathbb{E}}}

% Entries of a matrix

% entries of a tensor
% Same font as tensor, without \bm wrapper

% The true underlying data generating distribution

% The empirical distribution defined by the training set

% The model distribution

% Stochastic autoencoder distributions

 % Laplace distribution

% Wolfram Mathworld says $L^2$ is for function spaces and $\ell^2$ is for vectors
% But then they seem to use $L^2$ for vectors throughout the site, and so does
% wikipedia.

 % See usage in notation.tex. Chosen to match Daphne's book.

% \title{Formatting Instructions For NeurIPS 2023}
\title{Toward Self-Improvement of LLMs via Imagination, Searching, and Criticizing}
% \title{A Preliminary exploration of LLMs Self-improvement with Search and Learning}

% The \author macro works with any number of authors. There are two commands
% used to separate the names and addresses of multiple authors: \And and \AND.
%
% Using \And between authors leaves it to LaTeX to determine where to break the
% lines. Using \AND forces a line break at that point. So, if LaTeX puts 3 of 4
% authors names on the first line, and the last on the second line, try using
% \AND instead of \And before the third author name.

% \author{Ye Tian\textsuperscript{1,2}\thanks{Equal Contribution; {\textdagger}Corresponding Author}, Baolin Peng\textsuperscript{1}\textsuperscript{$*$}, Linfeng Song\textsuperscript{1}\textsuperscript{$*$}, Lifeng Jin\textsuperscript{1}, Dian Yu\textsuperscript{1}, Lei Han\textsuperscript{2}, Haitao Mi\textsuperscript{1}\textsuperscript{\textdagger}, Dong Yu\textsuperscript{1}\\
% \textsuperscript{1}Tencent AI Lab, Bellevue, WA\\
% \textsuperscript{2}Tencent Robotics X \\
% \texttt{\{baolinpeng,lfsong,lifengjin,yudian,haitaomi,dyu\}@global.tencent.com} \\
% \texttt{\{yaptian,lxhan\}@tencent.com} \\\\
% }
\author{Ye Tian\textsuperscript{1,2}\thanks{Equal Contribution; {\textdagger}Corresponding Author}, Baolin Peng\textsuperscript{1}\footnotemark[1], Linfeng Song\textsuperscript{1}\footnotemark[1], Lifeng Jin\textsuperscript{1}, Dian Yu\textsuperscript{1}, Lei Han\textsuperscript{2}\\
\bf{Haitao Mi}\textsuperscript{1}\textsuperscript{\textdagger}, \bf{Dong Yu}\textsuperscript{1}\\
\textsuperscript{1}Tencent AI Lab, Bellevue, WA\\
\textsuperscript{2}Tencent Robotics X \\
\texttt{\{baolinpeng,lfsong,lifengjin,yudian,haitaomi,dyu\}@global.tencent.com} \\
\texttt{\{yaptian,lxhan\}@tencent.com} \\\\
}

% \author{%
%   David S.~Hippocampus\thanks{Use footnote for providing further information
%     about author (webpage, alternative address)---\emph{not} for acknowledging
%     funding agencies.} \\
%   Department of Computer Science\\
%   Cranberry-Lemon University\\
%   Pittsburgh, PA 15213 \\
%   \texttt{hippo@cs.cranberry-lemon.edu} \\
  % examples of more authors
  % \And
  % Coauthor \\
  % Affiliation \\
  % Address \\
  % \texttt{email} \\
  % \AND
  % Coauthor \\
  % Affiliation \\
  % Address \\
  % \texttt{email} \\
  % \And
  % Coauthor \\
  % Affiliation \\
  % Address \\
  % \texttt{email} \\
  % \And
  % Coauthor \\
  % Affiliation \\
  % Address \\
  % \texttt{email} \\
% }

\begin{document}
% \footnotetext{*Equal Contribution; \textsuperscript{\textdagger}Corresponding Author}

\maketitle

\begin{abstract}

Despite the impressive capabilities of Large Language Models (LLMs) on various tasks, they still struggle with scenarios that involves complex reasoning and planning. Self-correction and self-learning emerge as viable solutions, employing strategies that allow LLMs to refine their outputs and learn from self-assessed rewards. Yet, the efficacy of LLMs in self-refining its response, particularly in complex reasoning and planning task, remains dubious. In this paper, we introduce \model{} for the self-improvements of LLMs, which integrates Monte Carlo Tree Search (MCTS) with LLMs to establish a self-improving loop, thereby enhancing the capabilities of LLMs without additional annotations. Drawing inspiration from the success of AlphaGo, \model{} addresses the unique challenges of combining MCTS with LLM for self-improvement, including data scarcity, the vastness search spaces of language tasks, and the subjective nature of feedback in language tasks. \model{} is comprised of prompt synthesis component, an efficient MCTS approach tailored for language tasks, and a trio of critic models for precise feedback. Our experimental results in mathematical reasoning tasks demonstrate that \model{} significantly enhances the performance of LLMs without additional annotations, showing the potential for self-improvement in LLMs. The code is available at \url{https://github.com/YeTianJHU/AlphaLLM}.
  
\end{abstract}

\section{Introduction}
\label{sec:intro}

LLMs, trained on trillions of tokens with billions of parameters have shown unparalleled capabilities in a wide range of natural language processing tasks~\citep{touvron2023llama,team2023gemini,openai2023gpt}. Nevertheless, they continue to face challenges in scenarios requiring complex reasoning and strategic planning ~\citep{valmeekam2022large,stechly2024self}. While advanced prompting approaches such as Chain, Tree, Graph-of-Thought~\citep{wei2022chain,yao2024tree, besta2024graph, ding2023everything}, it remains essential to fine-tune LLMs using a substantial volume of high-quality, supervised data to fundamentally improve the model performance~\citep{nye2021show,lewkowycz2022solving,chung2022scaling}. This methodology is inherently limited by the scope and quality of data that humans can provide.

% LLMs, trained on trillions of tokens with billions of parameters have shown unparalleled capabilities in a wide range of natural language processing tasks~\citep{touvron2023llama,team2023gemini,openai2023gpt}. Nevertheless, they continue to face challenges in scenarios requiring complex reasoning and strategic planning ~\citep{valmeekam2022large,stechly2024self}. While advanced prompting approaches such as Chain, Tree, Graph-of-Thought~\citep{wei2022chain,yao2024tree, besta2024graph, ding2023everything}, which generate intermediate steps in the reasoning process demonstrate large improvements on reasoning capability of LLMs, it remains essential to fine-tune LLMs using a substantial volume of high-quality, supervised data to fundamentally improve the model performance~\citep{nye2021show,lewkowycz2022solving,chung2022scaling}. This methodology is inherently limited by the scope and quality of data that humans can provide.

Considering these challenges, the concept of self-correction and self-learning have been proposed as promising solutions~\citep{madaan2024self,saunders2022self,chen2024self}. Within these framework, LLMs typically operate by employing two main strategies: 1) they continuously refine their responses based on the feedback of their past responses, and 2) they extensively sample responses then learn from preferences judged by itself as reward models with PPO or DPO~\citep{yuan2024advancing,yuan2024self,chen2024self}. However, it remains a matter of ongoing research whether LLMs can effectively critique their own outputs to either enhance response quality or apply a scalar reward to indicate the quality of responses, especially in contexts demanding intricate planning and reasoning~\citep{valmeekam2022large,stechly2024self,huang2023large,hong2023closer}. On the other hand, advanced search algorithms such as MCTS, combined with reinforcement learning, have enabled models to learn from self-play and achieve human parity or even surpass human performance in complex tasks such as the game of Go~\citep{silver2016mastering, silver2017mastering}. This naturally raises a question: is it viable to leverage the strengths of MCTS alongside LLMs to inaugurate a novel paradigm of self-improving? More precisely, could the assimilation of MCTS empower LLMs to more effectively explore better responses, guided by strategic signals, and subsequently optimize these responses to enhance overall performance?

% To answer this question, we firstly take a systematic view of AlphaGo. There are three key elements lead to the success of it: 1) Large volume of expert and self-playing data; The imitation learning on expert data allows it to learn and mimic human-like moves, and the learning on self-play data encourages the discovery of innovative tactics beyond human limitations~\citep{clark2015training}. 2) MCTS that helps explore possible moves based on statistical sampling of the search space. By efficiently narrowing down the most promising moves and simulating their outcomes, AlphaGo could make highly informed decisions even in the complex and vast decision space. 3) Accurate and distinct environment feedback; The clear and accurate feedback (win or loss) from the environment in Go provided AlphaGo with a straightforward and unambiguous signal to learn from. Taking these key elements into considerations, the integration of MCTS for LLM self-improving introduces several challenges: 1) Limited Data: The high-quality annotation data for LLMs is typically limited.  In addition, the synthesis of effective training data for LLMs, akin to AlphaGo's self-play, remains a research question. 2) Search efficiency: The potential token combinations in natural language tasks create an exponentially large search space, presenting a formidable obstacle to the efficiency of MCTS. 3) Imperfect feedback: Unlike the clear win/loss signal in Go, feedback in natural language tasks is often subjective and nuanced, lacking a straightforward metric for success. 

To answer this question, we begin with a systematic examination of AlphaGo, identifying three critical aspects for its success: (\RN{1}) The large volume of data, including self-play data. (\RN{2}) The use of tree search, which facilitates the exploration of potential moves through statistical sampling of the large search space. (\RN{3}) Accurate and unambiguous environment feedback; the direct and accurate feedback (win or loss) provided by the game of Go offers a clear and unequivocal learning signal~\citep{silver2017mastering}. The integration of MCTS with LLMs for self-improvement has several challenges: (\RN{1}) Limited Data: High-quality annotated data for LLMs is generally scarce. Furthermore, how to construct of synthetic data for LLMs training, similar to AlphaGo's self-play data, remains unclear. (\RN{2}) Search Efficiency: The vast number of potential token combinations in natural language tasks results in an exponentially large search space, posing a significant challenge to the efficiency of MCTS~\citep{Ramamurthy2022IsRL}. (\RN{3}) Imperfect Feedback: In contrast to the clear win/loss feedback in Go, feedback in natural language tasks is often subjective and nuanced, without a straightforward measure of success.

\begin{figure}[!t]
    \centering
    \includegraphics[width=0.9\textwidth]{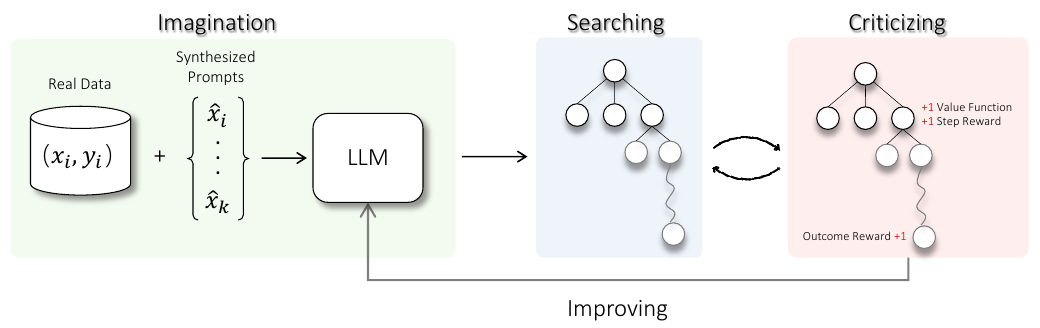}
    \caption{Imagination-Searching-Criticizing self-improvement loop: Imagination component synthesizes prompts as new learning examples, with MCTS searching better trajectories guided by signals from critics for policy improving.}
    \label{fig:framework}
\end{figure}

In this paper, we introduce \model{}, an imagination-searching-criticizing framework designed for the self-improvement of LLMs . \model{} consists of three key components, as illustrated in Figure~\ref{fig:framework}. First, an imagination component is designed to synthesize prompts, alleviating the issues of data scarcity. Second, we propose \emcts{} tailored for efficient searching in language tasks. Particularly, it has been show that planning at multiple levels of temporal abstraction is critical for RL problems with a long horizon and large action space~\citep{sutton1999between,peng2017composite,Luketina2019ASO}. As such, we propose formulating the text generation process as options over a Markov Decision Process (MDP) problem, where each option represents the generation of a collection of tokens for a specific subtask, similar to the concept of chains in chain-of-thought prompting. This formulation improves search efficiency by substantially reducing the search depth. Additionally, we propose the use of state merge and adaptive branching factors to further enhance search efficiency by balancing the trade-off between search width and depth. Lastly, since accurate feedback is crucial to the success of MCTS, we introduce a trio of critic models to guide \emcts{}, including a value function for estimating expected rewards, a process reward model for assessing node correctness, and an outcome reward model for evaluating the overall trajectory. For complex tasks with which LLMs struggle assessing such as arithmetic computation and code execution, to ensure the accuracy of feedback, we augment the critics with the capacity to make dynamic decisions on which tools to use, when to use them, and how to use them effectively. After \emcts{} stage, we collect the trajectory with the largest reward from the critic models as the training examples to improve LLMs. 

The experimental results on mathematical reasoning tasks demonstrate that \model{} can efficiently search for better responses and use them to improve LLMs' performance, forming an effective self-improving loop. Notably, based on Llama-2-70b and WizardMath-70B-V1.0, \model{} can improve its performance from 57.8 to 92.0 on GSM8K and from 20.7 to 51.0 on MATH, performing comparably to GPT-4.

\section{Related Work}
\label{sec:related_work}

\paragraph{Search with LLM}
Effective search strategy has been shown crucial for tasks that involve complex reasoning and planning, such as go \citep{silver2016mastering} and math reasoning \citep{gsm8k,math}.
For math reasoning tasks, various search methods have been studied.
One direction of research \citep{zhu2024deductive,xie2024self} designed beam search with dynamic pruning, where beam items of low quality are pruned.
Another line of work \citep{yao2024tree,long2023large,besta2024graph,hao2023reasoning,feng2023alphazero} maintains a tree or a graph that represents the current progress of solving the input question where potential branches are iteratively expanded.
Both our approach and \cite{feng2023alphazero} are based on the MCTS algorithm, while one main difference is how to define a search step: \cite{feng2023alphazero} fix a search step to be either a token or a sentence, while our approach is more flexible on deciding steps.
We have also carefully designed the MCTS process, incorporating multiple critique signals to guide the search more effectively and introducing adaptive search parameters for improved state exploration.
As the result, our approach achieves much better performances.

\paragraph{LLM Self-improving}
Being a key to the success of scalable oversight \citep{bowman2022measuring},
self-improving for LLM aims to align the LLM to human preference and values mainly using the supervision from the knowledge inside the LLM \citep{zelikman2022star,zelikman2024quiet}.
One crucial part of self-improving is how to obtain reliable signal of critique to distinguish between good responses from the LLM and bad ones.
Initial work \citep{bai2022constitutional,wang2022self} first asks the LLM to generate input queries of diverse tasks and the corresponding outputs.
They then rely on hand-crafted heuristic rules to filter out redundant or low-quality data pairs (e.g. the query is too long or too short). 
Since it is non-trivial to compose effective heuristic rule, later work \citep{sun2023principle,li2023self,guo2024human} proposes a few general principles or judging criteria and ask the LLM itself to evaluate the quality its responses based on these guidance, hoping that LLMs can automatically designate these principles into each data point to better guide data filtering. However, this requires LLMs to have strong abilities to apply these principles for each specific case and make correct judgements. Different from previous work, we propose to leverage the supervision from MCTS for LLM self-improvement: taking the outputs of MCTS to continue train the LLM. This is because the outputs from MCTS are usually in much better quality then standard nucleus sampling, and the large gap ensure that the LLM can self improve.

% Another line of research explores cheaply available knowledge.
% Some \citep{saunders2022self,wang2023shepherd} collects large-scale critique data from question-and-answer websites (e.g., stack exchange) for continue pretraining, while others \citep{gou2023critic} utilize external tools to provide more fine-grained guidance.
% The goal of both directions is to enhance critique ability of the LLM for self-improving.
% Our approach based on MCTS is intuitively orthogonal to this line of research.

\section{Preliminaries}
\label{sec:pre}
\subsection{Problem Formulation}

In this paper, we consider a LLM characterized by probability $p_\theta$ and denoted as policy $\pi_\theta$. It takes a sequence $\vx =[x_1, \cdots, x_n]$ as input, which is typically referred as prompt, to generate the response $\vy = [y_1, \cdots, y_m]$. In the context of LLMs, each $x_i$ and $y_i$ represents a token from a pre-defined vocabulary. The policy $\pi_\theta$ operates in an autoregressive manner, where each token is generated sequentially, relying solely on the context provided by the previously generated tokens. The policy therefore constitutes a Markov process in which the conditional probability distribution $p_\theta(\vy|\vx)$ can be decomposed and expressed with the chain rule as $p_\theta(\vy|\vx) = \prod_{i=1}^{m} p_{\theta}(y_i|\vx, \vy_{<i})$.
% \begin{equation*}
% p_\theta(\vy|\vx) = \prod_{i=1}^{m} p_{\theta}(y_i|\vx, \vy_{<i})
% \end{equation*}

With this property, the text generation task can be formulated as an Markov Decision Process (MDP) problem consisting of $(\gS, \gA, T, R, \gamma)$~\cite{} in which, $\vs_t \in \gS$ represents the context information of current trajectory, \ie current status of the generation process, \eg a partial response to a prompt; $a_t \in \gA$ denotes a single action or sampled token from the vocabulary, leading to a transition to a new state $\vs_{t+1}$, by concatenating $\vs_t$ and $a_t$; $r_t = R(\vs_t, a_t)$ manifest the evaluation of the generation to the prompt, reflecting the desirability or preferences of each state-action pair.

This MDP framework sets the stage for applying Reinforcement Learning (RL) methods to optimize the policy $\pi_\vtheta$ aiming to maximize the expected cumulative reward $R$. Base on these setups, we describe the self-improving problem. Given a LLM $\pi_\vtheta$ and an initial dataset $\gD^0$, which consists of $N$ expert-generated prompt-response pairs $\{(\vx_i^0, \vy_i^0) \mid i \in [N]\}$, the goal of self-improving is to iteratively refine $\pi_\theta$ to maximize the reward. The refinement process includes learning from synthesized prompts and corresponding responses. These responses are obtained using an advanced search algorithm that navigates the space of possible responses to maximize the expected reward. The detailed process is described in \Algref{algo:self_improving} in Appendix. The primary challenges in forming an effective self-improving loop lie in synthesizing suitable prompts, efficiently searching over a vast action space, and obtaining precise feedback, which will be discussed in \S \ref{sec:method}.

% how to synthesize appropriate prompts, efficiently search over large action space, and obtain accurate feedback.

% process of \model{} as follows: Starting with an initial dataset $\gD^0 = \{(\vx_i^0, \vy_i^0) \mid i \in [N]\}$ comprising N expert-generated prompt-response pairs, we first train critic models as reward $R$ manifesting the task success metrics. Subsequently, {\tt Synthesizer} constructs synthetic prompts $[\vx^1_i]$ as the novel learning materials. We then collect trajectories $[\hat{\vy}^1_i]$ via \emcts{} that are evaluated to have the highest reward by sampling policy $\pi^0_\theta$, forming a dataset $\gD^1 = \{(\vx_i^1, \hat{\vy}_i^1) \mid i \in [M]\}$. Finally, the policy $\pi^0_\theta$ is updated to maximize the expected reward on $\gD^1$. This iterative improvement process, detailed in \Algref{algo:self_improving}, aims to incrementally maximize the task-specific reward by synthesizing suitable prompts, employing efficient search algorithm, and utilizing precise evaluation by the critic models. We describe in details about the data synthesizer, efficient MCTS and the design of critic models in the subsequent sections.

\subsection{Monte Carlo Tree Search}

MCTS is a sampling-based search algorithm for policy optimization in decision-making problems. It would iteratively build a search tree, by repeating four phases: selection, expansion, evaluation, and backpropagation. In the selection phase, it would recursively select the children from the root node by Upper Confidence Bound (UCB) ~\citep{auer2002finite}, $UCB(i)=w_i+C*\sqrt{2*\ln{\frac{N_i}{n_i}}}$, where $n_i$ and $N_i$ are the visit counts for the node $i$ and its parent respectively, $C$ represents a hyperparameter balancing exploration and exploitation, and the $w_i$ is the average value of all descendant nodes of $i$.
% \begin{equation}
% \label{eqs:ucb}
% UCB(i)=w_i+C*\sqrt{2*\ln{\frac{N_i}{n_i}}}
% \end{equation}
% where $n_i$ and $N_i$ are the visit counts for the node $i$ and its parent respectively, $C$ represents a hyperparameter balancing exploration and exploitation, and the $w_i$ is the average value of all descendant nodes of $i$. %Following selection, the tree undergoes expansion according to the defined policy in the expansion phase. Then in the evaluation phase, the value of the newly expanded node is estimated, by sampling or model-based methods. Finally, in the backpropagation phase, the estimated value is backpropagated to all ancestor nodes of the newly expanded node. 

\section{\model{}}
\label{sec:method}

% Algorithm 1 - Self improving loop

% Require : Initial dataset D^0 = {(x_i, y_i) | i \in [N]}, policy model \pi_\theta^0, reward model R, number of self-improving training loop K

% for k ← 1, …, K  do

% Generate synthetic prompts [x_i^1] = Data(\pi_\theta^0, D^0)

% Collect trajectories with MCTS guided by R [y_i^1] = MCTS(\pi_\theta^{k-1}, [x_i^1])

% Construct dataset D^k = {(x_i^k, y_i^k) | i \in [N]}

% Update policy \theta^k = argmin_\theta L(\pi_\theta^k-1, D^k)

% end for

% output \theta^k

% - self-improving problem formulation, transiting to data / search / critic
% - overview
% - data
% - efficient mcts
% -- formulation, s,o,a,r,beta t
% -- mcts
% -- adaptive branching
% -- state fuse
% -- fastrollout
% - Critic Models
% - - value function
% - - process rm
% - - outcome rm

\subsection{Overview}

The architecture of \model{} is depicted in Figure~\ref{fig:framework}, comprising three key components. Firstly, the imagination component is tasked with synthesizing prompts as learning examples. Secondly, an efficient search component, named \emcts{}, is proposed to search high-quality trajectories for optimizing the policy. Lastly, the search process is guided by critics specifically designed to provide reliable signals.

\subsection{Data Synthesizing}

Let $\gD^0 = \{(\vx_i, \vy_i) \mid i \in [N]\}$ denote the initial dataset consisting of $N$ expert-generated prompt-response pairs. The data synthesizing process aims to expand this dataset by generating a set of synthesized prompts $\gD^1 = \{(\vx_i^1,\cdots) \mid i \in [N]\}$. The generation of each synthesized prompt $\vx_i^1$ can be mathematically described as a transformation $g$ applied to one or more examples from $\gD^0$, $\vx_i^1 = g(\vx_{i_1}^0,\cdots,\vx_{i_m}^0, \pi^0)$
% \begin{equation*}
% \vx_i^1 = g(\vx_{i_1}^0,\cdots,\vx_{i_m}^0, \pi^0)
% \end{equation*}
where $\vx_{i_1}^0,\cdots,\vx_{i_m}^0$ are selected examples from $\gD^0$. The transformation function $g$ controls the synthesis process, which can be a learnable function, manually defined heuristic rules, a strong LLM or the policy model itself $\pi^0$ equipped with data synthesis instructions. The data synthesizing process aims to enrich the diversity and complexity presented for the training of the policy model. Among various strategies, such as Self-instruct~\citep{wang2022self}, Evol-instruct~\citep{xu2023wizardlm}, we opt for a method akin to that described in~\cite{yu2023metamath}.

\subsection{\emcts{}}
\label{sec:mcts}

\subsubsection{Option-level MCTS}

\begin{table}[!htb]
\footnotesize
    \centering
    \setlength{\tabcolsep}{4pt}
    \begin{tabular}{c|c|c}

    \toprule
    \texttt{Search Node} & \texttt{Example} & \texttt{Termination}  \cr
    \midrule
    Token-level & $y_0 \rightarrow y_1 \rightarrow y_2 \rightarrow y_3 \rightarrow y_5 \rightarrow y_6 \rightarrow y_7 \rightarrow y_8$ &  token\cr
    \midrule
    Sentence-level & $y_0 y_1 y_2$ \enterkey{}  $\rightarrow y_4 y_5 y_6$ \enterkey{} $\rightarrow y_7 y_8 y_9 y_{10}$ & new line\cr
    \midrule
    Option-level & $y_0$  $\rightarrow y_1 y_2$ \enterkey{} $\rightarrow y_4 y_5 y_6$ \enterkey{} $y_7 y_8 y_9$ \enterkey{} $\rightarrow y_{10}$& termination function\cr
    \bottomrule
    \end{tabular}
    \vspace{2mm}
    \caption{Comparative illustration of token-level, sentence-level, and option-level MCTS search nodes. $y$ denotes a token sampled from the policy model. The arrow $\rightarrow$ represents the transition from one search node to the subsequent node within the search process.}
    \label{tab:option}
\end{table}

% To apply MCTS to LLMs, it is naturally ~\cite{liu2023making} to performing search at the token-level, considering each token as an action. However, due to the typically large vocabulary size in LLMs, conducting a deep search becomes very challenging as the search space grows exponentially. Some attempts propose sentence-level search where each sentence or step is regarded as an action. While sentence-level search can reduce the complexity of the search space, it might affect the flexibility and performance applying MCTS to LLM as for some tasks subtle variations of several tokens could significantly alter the reward and some tasks might requires more than sentence level search, such as paragraph level search.

% However, the typically large vocabulary of LLMs poses a significant challenge, as conducting a deep search becomes increasingly difficult due to the exponential growth of the search space.

When applying MCTS to LLMs, it is natural to perform token-level search, where each token is considered as an action~\citep{liu2023making}. However, the substantial vocabulary size typical of LLMs presents a significant challenge \ie conducting a deep search in such a vast space becomes increasingly complex as the search space expands exponentially. To mitigate this, some efforts proposed a sentence-level search, treating each sentence or step as a search node~\citep{feng2023alphazero}. While this method reduces the search space, it might compromise the flexibility and effectiveness of applying MCTS to LLMs, which is particularly true for tasks where subtle variations in token can dramatically impact the outcome, or where a more comprehensive search beyond a sentence is necessary.

Inspired by~\cite{option_mcts, de2016monte}, we use the term option as a search node and propose option-level MCTS where each option represents a sequence of tokens, which can range from multiple tokens to several sentences. A comparisons of different levels search is listed in Table~\ref{tab:option}. Mathematically, an option $o = \langle \gI, \pi, \beta \rangle$, where $\gI \subseteq \gS$ is a set of initial states for the option; $\pi: \gS \times \gA \rightarrow [0,1]$ is a policy to generate actions, which in our case is a LLM; and $\beta: \gS^{+} \rightarrow [0,1]$ is the termination function. Starting from a state $s_t$, we can choose all the options for which $s_t \in \gI$. Once an option is chosen, the policy $\pi$ will generate actions for several steps until the option terminates according to the termination function $\beta$. The option-level MCTS consists of stages including selection, expansion, simulation, and backpropagation. The option-level formulation offers more flexibility compared to the sentence-level, as a new line can be treated as a special case of the termination function, as demonstrated in Table \ref{tab:option}. Additional detailed steps of the option-level MCTS can be found in Appendix \ref{app:option_level_mcts}.

\subsubsection{Importance-Based Adaptive Branching}

In previous works related to option/sentence level tree search ~\citep{feng2023alphazero, yao2024tree}, it was a common practice to assume that each node in the tree has the same predefined width, \textit{i.e.}, branching factor. This assumption was due to the fact that unlike token-level MCTS with a limited action space, the sample space at the option-level is exceedingly large, with an unlimited number of token combinations. As a result, it was necessary to set a predefined maximum width for each node. However, this predefined branching factor is hard to set, as an improper choice can lead to a search tree that is either too shallow or too thin, resulting in an inefficient exploration of the search space.
% As a result, it was necessary to set a predefined maximum width,  which, however, often leads to an inefficient search space.

% \begin{equation*}
% E_{\phi}(t) = \mathop{\mathbb{E}_{\vo_t^{j}\sim \pi(\vs_t)}}[\min_{\vo_{t}^{i}}|v_{\phi}^{\pi}([\vs_t,\vo_t^{j}])-v_{\phi}^{\pi}([\vs_t,\vo_t^{i}])|]
% \end{equation*}
% \begin{equation*}
% I(\vs_t) = \max_{\vo_{t}^{i}} |v_{\phi}^{\pi}([\vs_t,\vo_t^{i}])- v_{\phi}^{\pi}(\vs_t)|
% \end{equation*}
To quantify the error induced by the branching factor limit, we defined the branching error \(E_{\phi}(t)\). For a node \(t\) with a branching factor of \(m_t\), it aims to use the \(m_t\) child options \(\vo_{t}^{i} \sim \gD_{t}^{children}\) (where \(i \in \{1, \ldots, m_t\}\)) to represent all possible options. Consequently, for a legal option \(\vo_{t}^{j} \sim \pi(\vs_t)\) from the option space, we can calculate the minimal value difference between it and the \(m_t\) existing options, which captures the error associated with representing other possible options using the \(m_t\) available options. It can be formulated as 
$E_{\phi}(t) = \mathop{\mathbb{E}_{\vo_t^{j}\sim \pi(\vs_t)}}[\min_{\vo_{t}^{i}}|v_{\phi}^{\pi}([\vs_t,\vo_t^{j}])-v_{\phi}^{\pi}([\vs_t,\vo_t^{i}])|]$, where $v_{\phi}^{\pi}$ is the value function which will be detailed in \S \ref{sec:critic}. Here we define the importance of node $\vs_t$ as $I(\vs_t) = \max_{\vo_{t}^{i}} |v_{\phi}^{\pi}([\vs_t,\vo_t^{i}])- v_{\phi}^{\pi}(\vs_t)|.$ For simplicity, we assume that the value of the children nodes are uniformly distributed (a detailed analysis of the Gaussian distribution can be found in Appendix \ref{app:node_importance_gaussian}). Under this assumption, we show in Appendix \ref{app:node_importance_uniform} that $E_{\phi}(t) \le \frac{I(\vs_t)}{m_t-1}.$
% \begin{equation*}
% E_{\phi}(t) = \frac{I(\vs_t)}{m_t-1}.
% \end{equation*}
While $E_{\phi}$ is less than some $\epsilon$, we aim to use a smaller total number of nodes for efficiency. 
% This can be formulated as an optimization problem:
% \begin{align*}
% \text{minimize:} & \sum m_t\\
% \text{subject to:} & \frac{I(\vs_t)}{m_t} \le \epsilon
% \end{align*}
\newtheorem{theorem}{Theorem}[section]
\begin{theorem}\label{thm:optimal_branching_factor}
The optimal branching factor $m_t$ in a tree search is set such that $m_t - 1$ is proportional to the node importance $I(\vs_t)$, under the condition $\frac{I(\vs_t)}{m_t-1} \le \epsilon$. \normalfont{Refer to Appendix \ref{app:node_importance_uniform} for the detailed proof.}
\end{theorem}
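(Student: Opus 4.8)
The plan is to set up a constrained optimization that trades off the per-node branching error against the total size of the search tree, and then show the optimum has $m_t - 1 \propto I(\vs_t)$. First I would treat the bound from the previous paragraph, $E_{\phi}(t) \le \frac{I(\vs_t)}{m_t - 1}$, as the quantity we wish to control, and impose the hard constraint $\frac{I(\vs_t)}{m_t - 1} \le \epsilon$ so that the branching error at every node is at most $\epsilon$. Subject to this, the goal is to minimize the total number of nodes generated by the search; since a node with importance $I(\vs_t)$ must have branching factor at least $1 + I(\vs_t)/\epsilon$, choosing $m_t$ any larger only inflates the tree without improving the error guarantee.

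Next I would make the efficiency objective precise. The total node count (or expansion cost) is a sum or product over tree levels of the branching factors $m_t$; for a fixed error budget one wants each $m_t$ as small as the constraint permits. Setting $m_t - 1 = I(\vs_t)/\epsilon$ — i.e. the smallest integer (or real, in the continuous relaxation) satisfying $\frac{I(\vs_t)}{m_t - 1} \le \epsilon$ — is therefore optimal, and this is exactly the statement that $m_t - 1$ is proportional to $I(\vs_t)$ with proportionality constant $1/\epsilon$. I would phrase this either via a Lagrangian argument (introduce a multiplier for the global node-budget constraint, differentiate the per-node contribution $m_t$ plus $\lambda$ times the error term $I(\vs_t)/(m_t-1)$, and solve the first-order condition $1 = \lambda I(\vs_t)/(m_t-1)^2$, which gives $m_t - 1 = \sqrt{\lambda\, I(\vs_t)}$ — then note that under the \emph{equality} form of the constraint $I(\vs_t)/(m_t-1) = \epsilon$ this collapses to linear proportionality) or, more directly, by the monotonicity observation that under the constraint the minimal feasible $m_t$ is an affine function of $I(\vs_t)$.

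I would then tie the two formulations together: the Lagrangian route explains \emph{why} one allocates more branches to higher-importance nodes (marginal error reduction per extra branch scales with $I(\vs_t)$), while the constraint-satisfaction route pins down the exact constant $1/\epsilon$. The conclusion is that the optimal choice is $m_t = 1 + \lceil I(\vs_t)/\epsilon \rceil$, so $m_t - 1 \propto I(\vs_t)$ as claimed.

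The main obstacle I anticipate is making the notion of ``efficiency'' / ``smaller total number of nodes'' rigorous enough that the optimization is well posed: the per-node inequality $E_\phi(t) \le I(\vs_t)/(m_t-1)$ only upper-bounds the error, so minimizing the bound is not literally minimizing the true error, and the total node count depends on the (random, policy-dependent) tree shape and on how importances propagate to descendants. I would sidestep this by adopting the worst-case bound as the operative error measure and arguing level-by-level (or node-by-node) that shrinking any individual $m_t$ to its constraint-tight value cannot increase total cost while still respecting the $\epsilon$-budget — a monotonicity argument rather than a full global optimization — and relegate the distributional subtleties to the uniform/Gaussian analyses already deferred to Appendix \ref{app:node_importance_uniform} and Appendix \ref{app:node_importance_gaussian}.
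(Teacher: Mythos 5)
Your proposal is correct and follows essentially the same route as the paper: the paper also formulates the problem as minimizing $\sum m_t$ subject to $\frac{I(\vs_t)}{m_t-1} \le \epsilon$ and uses a Lagrangian/stationarity argument to conclude the constraint is tight, yielding $m_t = \frac{I(\vs_t)}{\epsilon} + 1$, i.e.\ $m_t - 1 \propto I(\vs_t)$ with constant $1/\epsilon$. Your direct monotonicity phrasing (smallest feasible $m_t$ under the hard constraint) is just a cleaner statement of the same observation, and your side remark that a soft-penalty Lagrangian would instead give a square-root law is a reasonable aside but not needed.
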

% \begin{proof}
% See Appendix \ref{app:node_importance_uniform}.
% \end{proof}

A similar concept has also been proposed in ~\cite{taylor2014reinforcement, clouse1996integrating}. Intuitively, $I(\vs_t)$ captures the maximum value deviation from the current state. When this value is small, there is no need to explore further on this node, as there will not be a significant difference by rolling out on this node. Conversely, if the value is large, it is worth trying different children. We set the number of children allowed for a node $n(\vs_t)$ (after extracting $1$) to be linear with this importance, using a factor $\alpha$. In practice, to avoid extreme cases of large variance of $I(\vs_t)$ in the early stage, we bound the number of children by depth-dependent constants $c_{\mathtt{min}}(t)$ and $c_{\mathtt{max}}(t)$, $n(\vs_t) = \max\left(c_{\mathtt{min}}(t), \min\left(\lfloor \alpha I(\vs_t) \rfloor+1, c_{\mathtt{max}}(t)\right)\right).$
% \begin{equation*}
% n(\vs_t) = \max\left(c_{\mathtt{min}}(t), \min\left(\lfloor \alpha I(\vs_t) \rfloor+1, c_{\mathtt{max}}(t)\right)\right).
% \end{equation*}

\subsubsection{State Merge}

% With $n(\vs_t)$ determined, another issue is that options under the same node are very similar, leading to many unnecessary sub-trees. Intuitively, we want the distribution of their values to be spread out. As shown in Appendix \ref{app:node_merge}, given a set of values that are evenly distributed, the worst-case error induced by the branching factor limit is minimized.

% Since we cannot directly control the $\vo_t \sim \pi(\vs_t)$, one strategy to make the value distribution of child nodes more evenly spread is to utilize the concept of move groups ~\cite{van2012revisiting}. By merging similar nodes into the same group, we can potentially influence the minimal value gaps between children. This merging allows us to increase the diversity among groups, thereby covering a larger problem space with limited search rollouts and making the search process more efficient.

% Here, we adapt the definition of node predicate $p_{vM}$ from ~\cite{abel2018state} and ~\cite{fu2024accelerating} to represent whether two nodes are extremely similar. In practice, each time we generate a new option from the policy, we use heuristic functions as $p_{vM}$ to check its similarity with all existing groups. The heuristic function can either be a faster rule-based measurement (e.g., edit distance) or a model-based method (e.g., prompting a language model). Based on this, we decide whether to merge this option with a previous one or create a new group. 

With $n(\vs_t)$ determined, another issue is that options under the same node may be very similar, leading to many unnecessary sub-trees. Since we cannot directly control the $\vo_t \sim \pi(\vs_t)$, one strategy to mitigate this issue is to utilize the concept of move groups, as discussed in ~\cite{van2012revisiting}. By merging similar nodes into the same group, we can increase the diversity among groups, thereby covering a larger problem space with limited search rollouts and making the search process more efficient.

Here, we adapt the definition of node predicate $p_{vM}$ from ~\cite{abel2018state} and ~\cite{fu2024accelerating} to represent whether two nodes are extremely similar. In practice, each time we generate a new option from the policy, we use heuristic functions as $p_{vM}$ to check its similarity with all existing groups. The heuristic function can either be a faster rule-based measurement (e.g., edit distance) or a model-based method (e.g., prompting a language model). Based on this, we decide whether to merge this option with a previous one or create a new group. 

% Another explanation, as shown in Appendix \ref{app:node_merge}, is that given a set of values that are evenly distributed, the worst-case error induced by the branching factor limit is minimized. Intuitively, the state merge can also help make the value distribution of child nodes more evenly spread (although not absolutely, as different states may also have similar values).

\subsubsection{Fast Rollout with Specialized LM}

The simulation operation which employs a rollout policy to project future trajectories from a given state, is crucial for an effective MCTS. This process significantly improves the efficiency of exploration and exploitation, and enhances the accuracy of reward estimation\footnote{Typically, the closer the simulation is to the termination state, the more accurate the reward estimation becomes.}. Estimations made at the end of trajectories tend to have lower bias but higher variance; thus, simulating multiple possible trajectories yields low-bias, low-variance estimates, enabling a more informed and effective search process. Ideally, $\pi_\theta$ would serve as the rollout policy, yet its computational demands render it impractical for the rapid simulations required by MCTS. To address this challenge, we propose the use of a smaller, specialized LM as the fast rollout policy $\pi^{\mathtt{fast}}$. Given a state $\vs_t$, the fast rollout policy $\pi^{\mathtt{fast}}$ efficiently continues generation until it reaches a termination condition, denoted as $\pi^{\mathtt{fast}}(\vs_t)$.

\subsection{Critic}
\label{sec:critic}
% It is crucial for searching algorithms to have reliable guidance signals towards achieving the end goal. 
% In \model{}, we design three types of critic models to guide the search process, \ie value function $v^\pi$ predicting the future reward, process reward models \texttt{PRM} estimating node quality, and outcome reward model \texttt{ORM} assessing the overall trajectory quality. 

In \model{}, we design three types of critic models to guide the search process.

\paragraph{Value Function} The value function, denoted as $v^\pi(\vs)$, represents the expected return starting from state $\vs$ and following policy $\pi$ thereafter, given by $v^\pi(\vs) = \mathop{\mathbb{E}}_{\tau \sim \pi}[R(\tau)|s_0 = \vs]$ where $R(\tau)$ represents the discounted return of trajectory $\tau$. To train a parameterized value function $v^\pi_\phi(\vs)$, given the prompts $\gD = \{(\vx_i, \cdots) \mid i \in [N]\}$, for each prompt $\vx_i$, we generate multiple trajectories $\vtau_i^j = \{\vx_i, \vo_{i1}^j, \vo_{i2}^j, \cdots, \vo_{iT}^j \}$ by following policy $\pi$ for $J$ times. A final reward $r_i^j$ is assigned to indicate whether $\vtau_i^j$ aligns with $\vy_i$—for example, rewarding trajectories that contain correct answers in mathematical tasks or closely follow instructions as ground truth. We then construct a dataset $\gD_{\mathtt{value}} = \{ (\vs^j_{it}, v^j_{it}) \mid i \in [N], t \in [T], j \in [J] \}$ where $\vs^j_{it} = [\vx_i \cdot \vo^j_{<it}]$ and $v^j_{it} = r^j_i$. The value function $v_\phi^\pi$ is optimized by minimizing the mean squared error: $\gL_\phi = - \sE_{(\vs, v) \sim \gD_{\mathtt{value}}} (v_\phi^\pi(\vs) - v)^2$. Similar to ~\citep{feng2023alphazero}, $v_\phi^\pi$ is a LLM with an MLP layer on top to output a scalar on each token, using the scalar prediction at the last token of each state as the value.

% \begin{equation*}
% \gL_\phi = - \sE_{(\vs, v) \sim \gD_{\mathtt{value}}} (v_\phi^\pi(\vs) - v)^2
% \end{equation*}

\paragraph{PRM} The value function often struggles with credit assignment problem~\citep{sutton1984temporal} and its learning could be inefficient due to delayed and sparse rewards~\citep{sutton2018reinforcement}. Therefore, we propose to incorporate \prm{} that introduces process supervision~\citep{lightman2023let} for direct option assessment. \prm{} generates intrinsic rewards~\citep{chentanez2004intrinsically} to encourage explorations of advantageous options, effectively mitigating issues of reward sparsity by providing immediate, action-specific rewards. Given a state $\vs_t$ and an option $\vo_t$ at time $t$, the \prm{} aims to predict the immediate reward $r_t^{\texttt{PRM}}$ that results from taking option $\vo_t$ in state $\vs_t$. Formally, the \prm{} is a function $R(\vs_t, \vo_t) \rightarrow r^{\mathtt{PRM}}_t$. While \prm{} ideally requires quality labels for each state ~\citep{uesato2022solving}, due to the high cost and time involved in obtaining these, MC estimation with prefix sampling~\citep{wang2023math} is used as a proxy, which aligns with the objective of the value function. Instead of adding a MLP layer on top of the policy model for outputting a scalar reward~\citep{ouyang2022training}, we formulate \prm{} as a text generation task to best leverage LLM's intrinsic knowledge for assessing the quality of an option. We adapt the dataset constructed for the value function as $\gD_{\mathtt{PRM}} = \{ (\vs_{it}, \vo_t, r_t^{\mathtt{PRM}} ) | i\in[N], t\in[T]\}$ where $r_t^{\mathtt{PRM}}$ is the textual description of the reward, \eg an option can be regarded as good if $v_{it}$ is larger than certain threshold. To train \prm{},  we initialize it from the policy model $\pi$ and use the following prompt templates and typical language model loss. The prompt template is shown in Appendix \ref{app:prompt}.

% \begin{tcolorbox}[label=prm_prompt]
% \#\#\#[A detailed rubric that specifies how to evaluate a step of a task]\textbackslash n\textbackslash n\#\#\# State\textbackslash n\{\texttt{state}\}\textbackslash n\textbackslash n\#\#\#Action\textbackslash n\{\texttt{option}\}\textbackslash n\textbackslash n\#\#\#Assessment\textbackslash n\{\texttt{textual reward}\}
% \end{tcolorbox}

\paragraph{ORM} In additional to the value function and \prm{}, \orm{} is also used to guide MCTS. \orm{} is designed to evaluate options sequences in their entirety, assessing the extent to which the complete trajectory aligns with the desired end goal~\citep{uesato2022solving,lightman2023let,wang2023math,feng2023alphazero}. The outcome evaluation complements value function and \prm{} by offering a comprehensive assessment of trajectories. Crucially, \orm{} plays a vital role in the simulation stage of MCTS by providing more accurate signals on the terminal state, which in turn facilitates a more balance between exploration and exploitation strategies. \orm{} is formulated as a text generation task, similar to \prm{}. We leverage the same dataset for the value function training and construct $\gD_{\mathtt{ORM}} = \{ (\vx_i, \vo_{1:T}^i, r_i^{\mathtt{ORM}}) | i\in[N]\}$, where each instance includes a initial state or prompt $\vx_i$, a sequence of actions or options $\vo_{1:T}^i$ taken from that state, and a textual reward $r_i^{\mathtt{ORM}}$ indicating the sequence's success or quality. Similarly, \orm{} is initialized from the policy model $\pi$ and the following prompt templates and language model loss are used for training. The prompt template is shown in Appendix \ref{app:prompt}. \\
% \begin{tcolorbox}
% \#\#\#[A detailed rubric that specifies how to evaluate a complete trajectory of a task]\textbackslash n\textbackslash n\#\#\# Prompt\textbackslash n\{\texttt{prompt}\}\textbackslash n\textbackslash n\#\#\#Trajectory\textbackslash n\{\texttt{trajectory}\}\textbackslash n\textbackslash n\#\#\#Assessment\textbackslash n\{\texttt{textual reward}\}
% \end{tcolorbox}

% provide precision feedback on exploirtation

% \begin{equation*}
% s(\vs) = \beta_{\text{value}} \cdot v_\phi^\pi(\vs) + \beta_{\text{PRM}} \cdot \prm{}(\vs) + \beta_{\text{ORM}} \cdot \mathbb{E}_{\tau \sim \pi^{\mathtt{fast}}(\vs)} [\orm{}(\tau)]
% \end{equation*}

The final score evaluation of a state $\vs$ is a weighted sum of the value function, \prm{}, and \orm{}: $s(\vs) = \beta_{\text{value}} \cdot v_\phi^\pi(\vs) + \beta_{\text{PRM}} \cdot \prm{}(\vs) + \beta_{\text{ORM}} \cdot \mathbb{E}_{\tau \sim \pi^{\mathtt{fast}}(\vs)} [\orm{}(\tau)]$, where $\tau \sim \pi^{\mathtt{fast}}(\vs)$ represents trajectories starting from $\vs$ under $\pi^{\mathtt{fast}}$, and $\beta_{\text{value}}$, $\beta_{\text{PRM}}$, $\beta_{\text{ORM}}$ are hyperparameters. In practice, we found that the value function model has better precision and calibration, while \prm{} has superior recall (Appendix \ref{app:critic_performance}). Although \orm{} with fast rollouts provides low-bias, low-variance estimates, it still inherits some bias from $\pi^{\mathtt{fast}}$. Thus, combining these critics yields a stronger evaluation signal.

\subsection{Policy Self-Improvement}
\label{sec:self_improve}

% We have discussed how \emcts{} can guide policy to find trajectories of higher quality and. In this subsection, we discuss how to leverage these trajectories to further improve the policy. It is an iterative process with each iteration containing two main steps: \emph{data generation} and \emph{policy finetuning}.
The policy improvement an iterative process with each iteration containing two main steps: \emph{data generation} and \emph{policy finetuning}.
\paragraph{Data generation} In this step, we assume to have the current policy $\pi_{\theta_k}$ and synthetic prompts $\gD_k=\{\vx^k_1,\dots\}$ at the $k$-th round, where each $\vx^k_1$ represents a question.
We obtain the corresponding training data $\gD_k$ for policy $\pi_{\theta_k}$ by firstly performing \emcts{} on $\gD_k$ (\S \ref{sec:mcts}) and then sampling a trajectory $\vy^k_i$ from the corresponding tree for each question $\vx^k_i$.
% There are several ways to select a trajectory from a MCTS forest, such as taking a greedy path based on the critic score ($w_i$ in Eq. \ref{eqs:ucb}).
Here we choose the trajectory that yield the highest critic score on the leaf node for each input question.
Next, we filter out instances where the corresponding trajectory is substandard forming $\gD_k = \{(\vx^k_i, \vy^k_i)~|~f(\vx^k_i, \vy^k_i)>\gamma\}$
% \begin{equation*}
%  \gD_k = \{(\vx^k_i, \vy^k_i)~|~f(\vx^k_i, \vy^k_i)>\gamma\}
% \end{equation*}
where $f$ represents a function for quality scoring, and $\gamma$ indicates a threshold.
There can be several ways to implement the function, and here we simply use the \orm{} (\S \ref{sec:critic}).

% In this step, we assume to have the current policy $\pi_{\theta_k}$, value network $v^{\pi}$, \orm{} and synthetic prompts $\gD=\{\vx^k_1,\dots\}$, where each $\vx^k_1$ represents a question.
% We obtain the corresponding training data $\gD_k$ for policy $\pi_{\theta_k}$ by first performing MCTS on $D$ (\S \ref{sec:mcts}) and then sampling a trajectory $\vy^k_i$ from the corresponding MCTS forest for each question $\vx^k_i$.
% There are several ways to select a trajectory from a MCTS forest, such as taking a greedy path based on the critic score ($w_i$ in Eq. \ref{eq:mcts}).
% Here we choose the trajectory that yield the highest critic score on the leaf node for each input question.
% As the next step, we filter out instances where the corresponding trajectory is not in high quality:

% consider whether the the final answer of $y^i$ correct (equals to $a^i$).

\paragraph{Policy finetuning}
% With the obtained training data $\gD_k$, we organize the data into the following prompt templates $P_{SI}$:
With the obtained training data $\gD_k$, we organize the data into the prompt templates shown in Appendix \ref{app:prompt}. Then the policy $\pi_{\theta_k}$ is finetuned using target-loss: $\mathcal{L}_{\theta_k} = \mathbb{E}_{(\vx^k_i, \vy^k_i) \sim \gD_k} \big[\log \pi_{\theta_k}(\vy^k_i|\vx^k_i) \big]$, resulting in an updated policy $\pi_{\theta_{k+1}}$. We leave other training methods, such as DPO \citep{rafailov2023direct} or PPO \citep{schulman2017proximal} in future work.
% \begin{equation*}
%  \mathcal{L}_{\theta_k} = \mathbb{E}_{(\vx^k_i, \vy^k_i) \sim \gD_k} \big[\log \pi_{\theta_k}(\vy^k_i|\vx^k_i) \big]
% \end{equation*}
% This results in an updated policy $\pi_{\theta_{k+1}}$.
% We leave other training methods, such as DPO \citep{rafailov2023direct} or PPO \citep{schulman2017proximal} in future work.
% With the new policy, the value network and ORM is then updated as described in \S \ref{}.

\section{Experiments}
\label{sec:exp}

% \subsection{Evaluation Setups}

\subsection{Experiment Setups}

\model{} is generally applicable to a wide spectrum tasks. As an early exploration, in this paper, we conduct experiments on mathematical reasoning problems where the learning signals are clear to define \ie, final answer is correct or wrong. We choose to evaluate on two widely used datasets GSM8K~\citep{gsm8k} and MATH~\citep{math}. For GSM8K, we utilize the whole test set while for MATH, due to computation constraints, we utilize a subset following the same procedure of~\cite{lightman2023let}. We evaluate the performance of predicting answers correctly for policy models. In addition, we calculate the average rollouts, represented by the number of nodes in the tree, as a measure of computational efficiency. We compare the performance of \model{} with a suite of proprietary model, including OpenAI's GPT-4 and GPT-3.5, Anthropic's Claude-2, as well as Google's PaLM-2 and the gemini model family. To ensure a fair and consistent evaluation, we employ CoT as our primary prompting method. Additionally, we conduct comparisons with strong open-source models, including Llama-2-70b~\citep{llama2} and WizardMath-70B-V1.0~\citep{wizardmath}. % For LLaMA-2 70B, we present results from few-shot prompting as well as zero-shot prompting for its SFT version, which was trained using CoT rationales and final answers. Wizardmath 70B has been trained on a diverse set of mathematical data generated by ChatGPT, employing both SFT and RLHF. We provide zero-shot prompting results.

We select Llama-2-70b as the policy model for the GSM8K dataset and WizardMath-70B-V1.0 for the MATH dataset. To construct the training dataset for the value function, \prm{} and \orm{}, we generate 50 trajectories for each prompt and construct the training target following Section~\ref{sec:critic}. Both \prm{} and \orm{} are initialized using the weights from the policy model, while the value function uses a smaller Llama-2-13b model, as we observed no performance gains from increasing the value function model size. In the design of \orm{}, tool usage is not incorporated for GSM8K. However, for MATH, we enhance \orm{} by incorporating tools like python sympy to assess the quality of a trajectory, in a manner similar to that described by \citet{gou2023tora}. The training employ a learning rate of 1e-6 and are trained for one epoch. For the fast rollout policy model, we opt for the Abel-002-7B model~\citep{abel} for both the GSM8K and MATH tasks for its high efficiency and superior performance. For the MCTS parameters, they are configured at different scales, as shown in Appendix \ref{app:implementation}. We set $\beta_{\text{value}}$, $\beta_{\text{PRM}}$, and $\beta_{\text{ORM}}$ all to 1.0.

% We set the MCTS parameters as follows: in GSM8K, $c=1$ for the small scale (\texttt{\#rollout}) and $1.5$ for the large scale, with $\alpha=1$. For $t=0$, $c_\text{min}(0)=10$ for the small scale and $40$ for the large scale, while for the rest of $t$, $c_\text{min}(t)=2$. We also set $c_\text{max}(0) = 10$ for the small scale and $40$ for the large scale, and for the remaining $t$, $c_\text{max}(t)=10$. The termination condition is based on sentence termination. In MATH, the parameters are $c=1$, $\alpha=1$, and for $t=0$, $c_\text{min}(0)=10$ for the small scale and $20$ for the large scale, while for the rest of $t$, $c_\text{min}(t)=3$. We set $c_\text{max}(0) = 10$ for the small scale and $20$ for the large scale, and for the remaining $t$, $c_\text{max}(t)=10$. The termination function is rule-based, checking if there are any formulations or calculations in the sentence. If there are, the option is terminated; otherwise, the option continues to extend.

For policy self-improving (\S \ref{sec:self_improve}), we train the policy model up to 3 epochs, setting batch size to 128, learning rate to $5\times 10^{-6}$ and minimal learning rate to $1\times 10^{-6}$.
Linear warm-up and decay is used with warm-up percent to be 10\%.
We perform early stopping based on a devset held out from the training instances.
For GSM8K experiments, we perform two rounds of self-improving, synthesizing 6.4k and 7.9k prompts\citep{yu2023metamath} respectively to obtain the corresponding MCTS outputs for training.
For MATH experiments, we only perform one round of self-improving due to limited computation resources, and 5.9k prompts are synthesized.

The termination function for options can be either be learned or rule-based. In practice, for the GSM8K dataset, the termination condition occurs at the end of each line. This is based on the typical structure of this dataset, where each line represents a distinct step or point. For the MATH dataset, due to its complexity and the base model's tendency to generate many \texttt{\textbackslash n\textbackslash n} line breaks with some less meaningful content between them, termination occurs at the end of a line if a formula pattern is detected. During inference, if \texttt{\textbackslash n\textbackslash n} is encountered, we perform a rule-based check for formula patterns. It terminates if a pattern is found or continues generating until the next \texttt{\textbackslash n\textbackslash n}.

\subsection{Results}
% Compare all existing approaches, add MATH results
% \begin{table}[!htb]
%     \centering
%     \begin{tabular}{ll|c|c}
%         \multicolumn{2}{l|}{Method}         & GSM8K  & MATH   \\
%         \hline
%         \multicolumn{2}{l|}{GPT-4 }         & $92.0$ & $42.5$ \\
%         \multicolumn{2}{l|}{GPT-4 (PAL)}    & $94.2$ & $51.8$ \\
%         \hline
%          \multirow{3}{*}{Gemini} & 1.0 Pro  & $77.9$ & $32.6$ \\
%          & 1.0 Ultra                        & $88.9$ & $53.2$ \\
%          & 1.5 Pro                          & $92.5$ & $58.5$ \\
%         \hline
%         \multicolumn{2}{l|}{ChatGPT}        & $80.8$ & $35.5$ \\
%         \multicolumn{2}{l|}{Claude-2}       & $85.2$ & $32.5$ \\
%         \multicolumn{2}{l|}{PaLM-2}         & $80.7$ & $34.3$ \\
%         \hline
%         \multicolumn{2}{l|}{LLaMA-2 70B}     & $57.8$ & $14.4$ \\
%         \multicolumn{2}{l|}{LLaMA-2 70B SFT} & $69.3$ & $14.9$ \\
%       % \multicolumn{2}{l|}{ToRA}            & $84.3$ & $49.7$ \\
%     \multicolumn{2}{l|}{WizardMath 70B V1.0} & $81.6$ & $22.7$ \\
%         \hline \hline
%         \multirow{2}{*}{MCTS} & Base model  & $88.9$ & $48.7$ \\
%          & Improved policy                  & $92.4$ & $51.0$   \\
%         \hline
%     \end{tabular}
%     \vspace{4mm}
%     \caption{Overall results on GSM8K and MATH test sets. 
%     We use LLaMA-2 70B and WizardMath 70B V1.0 as our base models on GSM8K and MATH data sets respectively. 
%     }
%     \label{tab:final_result}
% \end{table}

{
\renewcommand{\arraystretch}{1.05}
% \begin{table*}[!t]
% \centering
% \scalebox{1.1}{    
% 	\setlength\tabcolsep{6pt}
% 	% \begin{threeparttable}
% 		% \fontsize{9}{9}
% 		% \selectfont
% 		\begin{tabular}{lcc|cc}
% 			\toprule
% 			Model         & \texttt{IDD} & \texttt{SYN} & \texttt{GSM8K} & \texttt{MATH} \cr 
% 			\midrule
%     GPT-3.5~\cite{} & - & - & 80.8 & 35.5 \cr
%    GPT-4~\cite{} & - & - & 92.0 & 42.5 \cr
%    GPT-4 (PAL)~\cite{} & - & - & 94.2 & 51.8 \cr
%    \midrule
%    Gemini 1.0 Pro~\cite{} & - & - & 77.9 & 32.6 \cr
%    Gemini 1.0 Ultra~\cite{} & - & - & 88.9 & 53.2 \cr
%    Gemini 1.5 Pro~\cite{} & - & - & 92.5 & 58.5 \cr
%    \midrule
%    Claude-2~\cite{} & - & - & 85.2 & 32.5 \cr
%    PaLM-2 540B~\cite{} & - & - & 80.7 & 34.3 \cr
%    \midrule
%    LLaMA-2 70B & $\times$ & $\times$ & 57.8 & 14.4 \cr
%    LLaMA-2 70B SFT & $\checkmark$ & $\times$ & 69.3 & 14.9 \cr
%    WizardMath 70B V1.0 & $\times$ & $\times$ & 81.6 & 22.7 \cr
%    \midrule
%    \model{} & $\checkmark$ & $\times$ & 88.9 & 48.7 \cr
%    \model{} & $\checkmark$ & $\checkmark$ & 92.4 & 51.0 \cr
% 			\bottomrule  
% 		\end{tabular}
% 	% \end{threeparttable}
% 		  }
% 	\caption{Overall results on GSM8K and MATH test sets. 
%     We use LLaMA-2 70B and WizardMath 70B V1.0 as our base models on GSM8K and MATH data sets respectively. \texttt{IDD} indicates that the model has been trained using in-domain data. \texttt{SYN} denotes that the model has been trained on synthetic prompts, with trajectories generated using MCTS. }
% 	\label{table:main_results}
% \end{table*}

\begin{table*}[!t]
\small
    \centering
    % \scalebox{1.1}{    
        \setlength\tabcolsep{6pt}
        % \begin{threeparttable}
        % \fontsize{9}{9}
        % \selectfont
        \begin{tabular}{lccccc|cc}
            \toprule
            Model                    & \texttt{Decoding} & \texttt{\#Annotation} & \texttt{RN} & \texttt{FA} & \texttt{SYN} & \texttt{GSM8K} & \texttt{MATH} \cr 
            \midrule
            GPT-3.5~\cite{}          & Sampling & - & - & -             & -            & 80.8           & 35.5 \cr          
            GPT-4~\cite{}            & Sampling & -  & - & -          & -            & 92.0           & 42.5 \cr          
            GPT-4 (PAL)~\cite{}      & Sampling & -   & - & -         & -            & 94.2           & 51.8 \cr          
            \midrule
            Gemini 1.0 Pro~\cite{}   & Sampling & -   & - & -          & -            & 77.9           & 32.6 \cr          
            Gemini 1.0 Ultra~\cite{} & Sampling & -    & - & -         & -            & 88.9           & 53.2 \cr          
            Gemini 1.5 Pro~\cite{}   & Sampling & -     & - & -        & -            & 92.5           & 58.5 \cr          
            \midrule
            Claude-2~\cite{}         & Sampling & -     & - & -        & -            & 85.2           & 32.5 \cr          
            PaLM-2 540B~\cite{}      & Sampling & -      & - & -       & -            & 80.7           & 34.3 \cr          
            \midrule
            Llama-2-70b              & Greedy & 0 & $\times$ & $\times$ & $\times$         & 57.8           & - \cr          
            Llama-2-70b SFT          & Greedy & 7.5k & $\checkmark$ & $\checkmark$ & $\times$     & 69.3           & - \cr          
            WizardMath-70B-V1.0      & Greedy & 96k & $\checkmark$ & $\checkmark$ & $\times$         & -           & 20.7 \cr          
            \model{}                 & Greedy & 7.5k/7.5k & $\times$ & $\checkmark$ & $\checkmark$ & 73.7           & 23.6 \cr         
            \midrule
            \model{}                 & \emcts{} & 7.5k/7.5k & $\times$ & $\checkmark$ & $\times$      & 88.9           & 48.7 \cr          
            \model{}                 & \emcts{} & 7.5k/7.5k & $\times$ & $\checkmark$ & $\checkmark$  & 92.0           & 51.0 \cr                       
            \bottomrule   
        \end{tabular}
        % \end{threeparttable}
    
    % \caption{Comparison results of \model{} on the GSM8K and MATH datasets, utilizing LLaMA-2 70B and WizardMath 70B V1.0 as base models for GSM8K and MATH datasets, respectively. \texttt{\#Annotation} indicates the quantity of labeled data employed for fine-tuning each base model. The annotation used for training are noted as \texttt{RN} for rationales and \texttt{FA} for final answers. \texttt{SYN} means models trained on synthetic prompts, where trajectories were generated using \emcts{}. }

\caption{Comparison results of \model{} on the GSM8K and MATH datasets. \texttt{\#Annotation} indicates the quantity of labeled data employed for fine-tuning policy or training critic models. The annotation used for training are noted as \texttt{RN} for rationales and \texttt{FA} for final answers. \texttt{SYN} means models trained on synthetic prompts, where trajectories were generated using \emcts{}. }
    
    \label{table:main_results}
\end{table*}
}
% Table~\ref{table:main_results} presents the experimental results of \model{} on GSM8K and MATH. We observe that

Table~\ref{table:main_results} lists the performance comparisons of various methods on the GSM8K and MATH datasets. Our findings reveal that \model{}, based on Llama-2-70B and WizardMath-70B-V1.0, utilizes only final answer annotations and continues to improve through training on responses from \emcts{}. This comparison underscores the efficacy and broad applicability of our imagination-searching-criticizing self-improving framework. Moreover, when our model is augmented with \emcts{} decoding strategy, its performance markedly improves, achieving scores of 88.9 and 48.7 on the GSM8K and MATH datasets, respectively. Following two iterations of self-improvement using synthetic prompts, \model{} demonstrates performance comparable to that of GPT-4. This suggests a viable approach to improving LLMs' capabilities in complex problem-solving tasks in a self-improving fashion, leveraging a minimal amount of labeled data. We also analyze the performance of various search methods in Appendix \ref{app:search_comparison}.

% In addition, table~\ref{table:search_comparison} presents the performance of various methods applied to different number of responses, from 10 to 50. Our analysis confirms several key findings: 1) Reranking utilizing \orm{} consistently outperforms self-consistency techniques, indicating that \orm{} is capable of generating meaningful signals for searching. 2) \emcts{} demonstrates superior performance while requiring significantly fewer rollouts. For instance, on the MATH dataset, \emcts{} achieves better results with only half the number of rollouts compared to reranking. These results suggest that our design of an efficient MCTS in \model{} can serve as an effective policy improvement operation, enabling the search for high-quality trajectories with reduced computational cost.

\subsection{Ablation Study}
\begin{table}[h]
\small
\centering
\begin{minipage}{.5\linewidth}
\centering

\begin{tabular}{ccccc|c}
\toprule
\texttt{AB} & \prm{} & \texttt{FR}-\orm{} & \texttt{SM} & \texttt{LG-\#Rollout} & Acc \cr
\midrule
$\times$ & $\times$ & $\times$ & $\times$ & $\times$ & 79.5 \\
$\checkmark$ & $\times$ & $\times$ & $\times$ & $\times$ & 84.9 \\
$\checkmark$ & $\checkmark$ & $\times$ & $\times$ & $\times$ & 85.9 \\
$\checkmark$ & $\checkmark$ & $\checkmark$ & $\times$ & $\times$ & 86.5 \\
$\checkmark$ & $\checkmark$ & $\checkmark$ & $\checkmark$ & $\times$ & 87.0 \\
$\checkmark$ & $\checkmark$ & $\checkmark$ & $\checkmark$ & $\checkmark$ & 88.9 \\
\bottomrule
\end{tabular}
\vspace{2mm}
\caption*{(a) Ablation study on GSM8K}
% \caption{First Table: Ablation study on GSM8K test set.}
\end{minipage}%
\begin{minipage}{.5\linewidth}
\centering

\begin{tabular}{cc|cc}
\toprule
\texttt{TA}-\orm{} & \texttt{Option}  & \texttt{Acc} & \texttt{\#Rollout} \cr
\midrule
$\times$ & $\times$ & 38.8 & 201 \\
$\checkmark$ & $\times$ & 44.1 & 198 \\
$\checkmark$ & $\checkmark$ & 45.4 & 148 \\
\bottomrule
\end{tabular}
\vspace{2mm}
\caption*{(b) Ablation study on MATH}
\end{minipage}
\caption{\textbf{(a)}: Ablation studies on the GSM8K test set of various components of \emcts{}, including adaptive branching, \prm{}, fast-rollout with \orm{}, state merge, and large number of rollouts. \textbf{(b)}: Ablation studies of the impacts of tool-augmented \orm{} and option-level formulation  on MATH.}
\label{table:ablation}
\end{table}

We assess the effectiveness of each component in \model{} and report the results on GSM8K in Table~\ref{table:ablation}(a). Vanilla MCTS, configured with only the value function and a fixed number of children per node, achieves an accuracy of 79.5\%. This serves as a reference point for evaluating the incremental benefits introduced by each additional component. The use of adaptive branching increae the accuracy to 84.9\%. The addition of \prm{} improves the accuracy modestly to 85.9\%, showing the effectivenss of process supervision for searching. A more significant improvement is observed with the introduction of \orm{} with fast rollout, which boosts the accuracy to 86.5\%.  Integrating state merging results in a further increase in accuracy, reaching 87.0\%. Finally the combined of increasing the number of rollouts with the other components yields the best performance on this task. 

Table~\ref{table:ablation}(b) presents the ablation study of option formulation and the tool-augmented critic on the MATH dataset. Our proposed \emcts{} achieves an accuracy of 45.4 with 148 rollouts. When options are excluded, reverting to essentially sentence-level MCTS, the performance decreases to 44.1 with a noticeable increase in the number of rollouts to 198. This demonstrates that option formulation introduces enhanced flexibility to MCTS, enabling better performance with fewer search efforts. Furthermore, the most significant decrease in performance is observed when only intrinsic knowledge is utilized for \orm{}, which drops to an accuracy of 38.8. This suggests that the absence of an external tool critically impedes the \orm{}'s capability to effectively assess challenging math problems.

% \begin{wraptable}{r}{5.5cm}
% \label{tab:option_critic}
% % \vspace{-5mm}

% % \label{tab:my_label}
% \begin{tabular}{lcc}

% \toprule
% Method & \texttt{Acc} & \texttt{\#Rollout} \\
% \midrule
% \emcts{} & 45.4 & 148 \\
% \ w/o option & 44.1 & 198 \\
% \ \orm{} w/o tool  & 38.8 & 201 \\
% \bottomrule

% \end{tabular}
% \caption{Comparison results of options formulation on MATH.}
% \end{wraptable}
\begin{figure}[!tbp]
    \centering
    \includegraphics[width=0.9\textwidth]{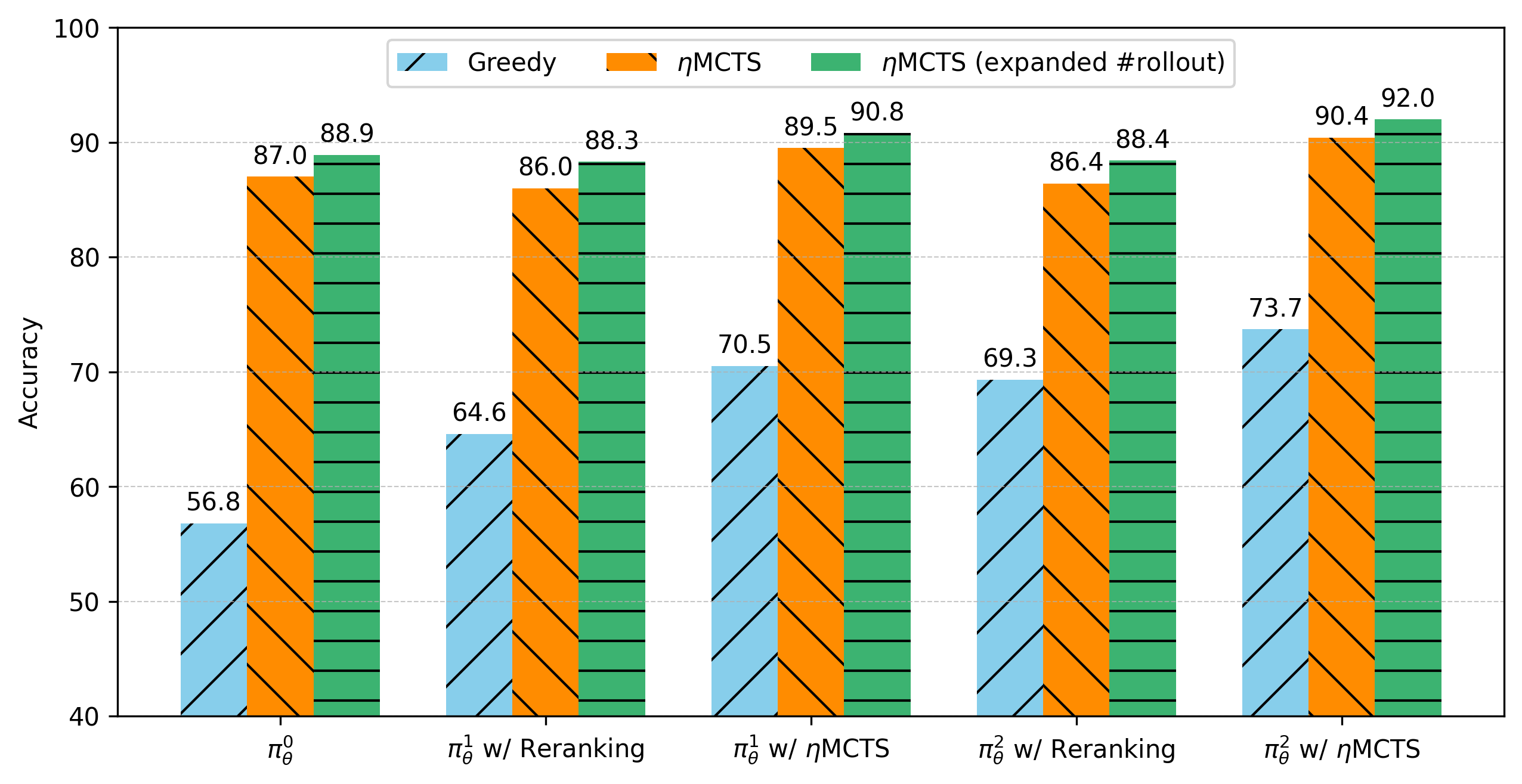}
    \caption{Empirical analysis on GSM8K of different self-improving data collection methods and number of iterations. Models are evaluated with greedy decoding, \emcts{} with small \#rollout and large \#rollout. }
    \label{fig:self_improving_ablations}
\end{figure}
Figure~\ref{fig:self_improving_ablations} depicts a comparative results on GSM8K of two rounds of self-improving trained on trajectories collected using reranking and \emcts{}. We report the performance of greedy decoding, \emcts{} with a relatively small number of rollouts (50-60), and \emcts{} with a larger number of rollouts (200-300) for each model. We observe that 1) Models trained on the trajectories from reranking or \emcts{} outperform the initial policy by a significant margin. In addition, the performance can be iteratively improved with training suggesting that self-improving has the potential to achieve continual performance gain. 2) While both reranking and \emcts{} can generate high-quality trajectories for self-improving , \emcts{} is performant with high efficiency and better accuracy. Models trained on trajectories generated by it not only exceed the performance of those trained on reranked trajectories but also, when decoded with \emcts{}, demonstrate on par performance with GPT-4, revealing that \model{} is an effective self-improving framework.

% 2) ablation study:
% search times, each components

\begin{table}[h]
\small
\centering
\begin{minipage}{.45\linewidth}
\centering

    \begin{tabular}{cl|c|c}
    \toprule
        \multicolumn{2}{c|}{\texttt{Method}}         & \texttt{Threshold}  & \texttt{Acc}\\
        % \hline
        \midrule
           &   Edit distance	               & $20$ & $86.8$ \\
           &   Edit distance	             & $50$ & $87.0$\\
           &  Cosine Similarity	            & $0.7$ & $86.3$\\
           & Model-based	& N/A	& $86.7$ \\
        \bottomrule
    \end{tabular}
\vspace{2mm}
\caption*{(a) Ablation on the choice of state merge functions.}
% \caption{First Table: Ablation study on GSM8K test set.}
\end{minipage}%
\begin{minipage}{.55\linewidth}
\centering

    \begin{tabular}{cl|c}
    \toprule
        \multicolumn{2}{c|}{\texttt{\#Trajetory}}         & \texttt{Acc}\\
        \midrule
           &   $1$	                & $85.9$ \\
           &   $4$	           & $86.5$\\
           &  $8$	       & $86.7$\\
        \bottomrule
    \end{tabular}
\vspace{2mm}
\caption*{(b) Ablation on the number of trajectories.}
\end{minipage}
\caption{\textbf{(a)}: Ablation studies on the choice of heuristic/model-based functions in state merge on GSM8K with base Llama2-70b. The model used in the model-based state merge is Llama-2-70b-chat. \textbf{(b)}: Ablation studies of the number of rollout trajectories in fast-rollout estimation on GSM8K with base Llama2-70b.}
\label{table:ablation_sm}
\end{table}

We further analyze the impact of different hyperparameters and design choices for each component. Table~\ref{table:ablation_sm}(a) shows that varying heuristic functions (with hyperparameters) for state merge has limited impact on performance. Table~\ref{table:ablation_sm}(b) shows that, as the number of fast-rollouts increases, there is a corresponding improvement in performance. This is due to the reduction in the variance of the estimates. We used $n=4$ in our experiments for better trade-off between performance and efficiency. Additional ablations on the choice of fast-rollout models, are provided in Appendix \ref{app:add_ablations}.

% Analysis: 
% x: bin of acc.  y: avg step/outcome reward
% value acc. for fast-rollout vs no fast-rollout
% Tool use vs. no tool use on MATH set (v$_2$ small vs. v$_4$ small)

% \subsection{Self-improving Results}

% \section{Limitations and Future Work}
% \input{limitations}

\section{Conclusion}
\label{sec:con}
In this paper, we introduce \model{}, an imagination-searching-criticizing framework designed for the self-improvement of LLMs without the necessity of additional annotations. At the heart of it is the integration of MCTS with LLMs. To tackle the inherent challenges associated with this integration, including data scarcity, the vastness of search spaces, and the subjective nature of feedback in language tasks, we introduce a data synthesizer for strategic prompt synthesis, an optimized MCTS tailored for efficient search in language tasks, and a trio of critic models to provide precise feedback. Our experimental findings on mathematical reasoning tasks reveal that \model{} significantly boosts the performance of LLMs without requiring extra data annotations. Moreover, when decoded with \emcts{}, \model{} performs comparably to GPT-4, highlighting the potential for self-improvement in LLMs.

% \section{Limitations}
% \label{sec:limitations}

\medskip
\newpage
\bibliography{neurips_2023}
\bibliographystyle{iclr2021_conference}

\newpage
\appendix
\section{Appendix}
\label{sec:appendix}
\subsection{Imagination, Searching, Criticizing and Learning Loop}
\begin{algorithm}
\caption{LLM self-improving loop}
% \KwIN{Initial dataset $D^0 = \{(x_i, y_i) \mid i \in [N]\}$, policy model $\pi_\theta^0$, reward model $R$, number of self-improving training loop $K$}
\textbf{Input} Initial dataset $\gD^0 = \{(\vx_i^0, \vy_i^0) \mid i \in [N]\}$, policy model $\pi_\theta^0$, reward model $R$, number of self-improving training loop $K$

\textbf{Output} $\theta^k$

\For{$k \leftarrow 1, \dots, K$}{
    Generate synthetic prompts $[\vx^k] = \texttt{SYN}(\pi_\theta^{k-1}, \gD^{k-1})$
    
    Collect trajectories with search algorithm, \eg MCTS guided by $R$. $[\hat{\vy}^k] = \texttt{MCTS}(\pi_\theta^{k-1}, [\vx^k])$
    
    % Construct dataset $\gD^k = \{(\vx^k, \hat{\vy}^k) \mid i \in [N]\}$
    Construct dataset $\gD^k = \{(\vx^k, \hat{\vy}^k) \}$
    
    Update policy $\theta^k = \arg\min_\theta L(\pi_\theta^{k-1}, \gD^k)$
}
\label{algo:self_improving}
\end{algorithm}

The algorithm is shown in Algorithm~\ref{algo:self_improving}.

\subsection{Option-level MCTS}
\label{app:option_level_mcts}
\begin{figure}[!t]
    \centering
    \includegraphics[width=\textwidth]{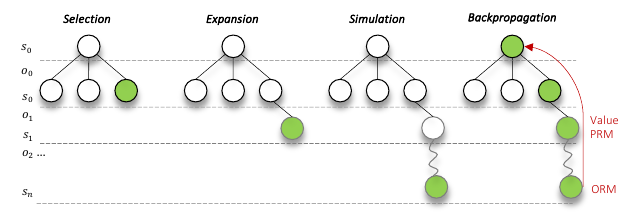}
    \caption{An overview of the four operations of \emcts{}. A node is selected, expanded, simulated with fast rollout policy until a terminal node is reached, then the signals from value function, \prm{} and \orm{} are backpropagated.}
    \label{fig:emcts}
\end{figure}
As illustrated in Figure~\ref{fig:emcts}, option-level MCTS consists of the following operations:
\begin{itemize}[noitemsep,topsep=0pt,parsep=2pt,partopsep=0pt,leftmargin=*]
\item \textbf{Selection} Starting from the root node, we iteratively select the child node based on Equation \ref{eqs:ucb}.
\item \textbf{Expansion} Once an expandable leaf node is selected, a new node is generated by starting with the previous state of the parent node as the initial option state. The option is then sampled using the policy $\pi$, and its completion is determined by the termination function $\beta$. 
\item \textbf{Simulation} The scaled reward of the newly expanded node, as well as some simulated future trajectories are evaluated using the feedback functions, which is discussed in \S \ref{sec:critic}.
\item \textbf{Backpropagation} The average value of the newly generated node and all its ancestors is updated using the scaled reward from the evaluation step. Meanwhile, the visit counts for these nodes are also increased by one.
\end{itemize}

\subsection{Importance-Based Adaptive Branching Under Uniform Distribution}
\label{app:node_importance_uniform} 
\renewcommand{\thetheorem}{\ref{thm:optimal_branching_factor}}

Let $V = \{v_\phi^\pi(\vs_t, \vo_t^1), v_\phi^\pi(\vs_t, \vo_t^2), ..., v_\phi^\pi(\vs_t, \vo_t^{m_t})\}$ be a set of $m_t$ values that are uniformly distributed. If the maximum and minimum values from $V$ are $v_{\max}$ and $v_{\min}$, the average gap between two consecutive values is given by $\frac{v_{\max} - v_{\min}}{m_t - 1}$. The upper bound of expected minimum distances from a new value $v_{\text{new}}$ to any value from $V$ is achieved when $v_{\text{new}}$ is consistently positioned at the midpoint between two consecutive values, and it is given by $\frac{v_{\max} - v_{\min}}{2(m_t - 1)}$.

Since $v_{\max} - v_{\min}=2I(\vs_t)$ for a uniform distribution, we can conclude that $E_\phi(t) \le \frac{I(\vs_t)}{m_t - 1}$.

\begin{theorem}
The optimal branching factor $m_t$ in a tree search is set such that $m_t - 1$ is proportional to the node importance $I(\vs_t)$, under the condition $\frac{I(\vs_t)}{m_t-1} \le \epsilon$.
\end{theorem}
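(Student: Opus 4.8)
The plan is to treat this as a constrained optimization problem. We have already established in Appendix~\ref{app:node_importance_uniform} that the branching error obeys $E_\phi(t) \le \frac{I(\vs_t)}{m_t - 1}$, so the feasibility constraint $E_\phi(t) \le \epsilon$ is guaranteed whenever $\frac{I(\vs_t)}{m_t - 1} \le \epsilon$. The objective is efficiency: we want to minimize the total number of nodes expanded, which for a fixed node means keeping $m_t$ as small as possible while still respecting the error tolerance. So the problem reduces to: minimize $m_t$ subject to $\frac{I(\vs_t)}{m_t - 1} \le \epsilon$.

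First I would rewrite the constraint as $m_t - 1 \ge \frac{I(\vs_t)}{\epsilon}$. Since the objective $m_t$ is strictly increasing in $m_t$, the optimum is attained at the boundary, giving $m_t - 1 = \left\lceil \frac{I(\vs_t)}{\epsilon} \right\rceil$, or in the continuous relaxation simply $m_t - 1 = \frac{I(\vs_t)}{\epsilon}$. This exhibits $m_t - 1$ as proportional to $I(\vs_t)$ with proportionality constant $1/\epsilon$, which is exactly the claimed statement; the factor $\alpha$ appearing in the main text plays the role of $1/\epsilon$ (up to the flooring and the depth-dependent clipping by $c_{\mathtt{min}}(t)$ and $c_{\mathtt{max}}(t)$, which are practical safeguards outside the scope of the idealized claim). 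I would then remark that if one instead wants to balance the total node budget across a collection of nodes with differing importances while keeping a uniform error bound $\epsilon$, a short exchange/Lagrangian argument on $\sum_t m_t$ subject to $\max_t \frac{I(\vs_t)}{m_t-1} \le \epsilon$ again forces $m_t - 1 \propto I(\vs_t)$, since any node with slack in its constraint can have its branching factor lowered without violating feasibility.

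The main obstacle is mostly expository rather than technical: the statement as phrased is somewhat informal (it asserts a proportionality "under the condition" $\frac{I(\vs_t)}{m_t-1}\le\epsilon$), so the real work is in pinning down precisely what is being optimized — I would make explicit that we minimize node count subject to the derived error bound, and that "optimal" means the smallest branching factor that still certifies $E_\phi(t)\le\epsilon$ via the inequality from Appendix~\ref{app:node_importance_uniform}. Once that is fixed, the derivation is a one-line monotonicity argument. I would also note the one genuine modeling assumption being leaned on — the uniform-distribution bound $v_{\max} - v_{\min} = 2I(\vs_t)$ and the midpoint worst-case placement of $v_{\text{new}}$ — and point to Appendix~\ref{app:node_importance_gaussian} for the analogous Gaussian computation, so the proportionality conclusion is seen to be robust to the distributional assumption up to constants.
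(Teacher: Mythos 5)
Your proposal is correct and follows essentially the same route as the paper: both cast the problem as minimizing the (total) number of children subject to the uniform-distribution bound $\frac{I(\vs_t)}{m_t-1} \le \epsilon$, and both conclude the optimum sits on the constraint boundary, giving $m_t = \frac{I(\vs_t)}{\epsilon} + 1$ and hence $m_t - 1 \propto I(\vs_t)$. The only cosmetic difference is that the paper reaches the active constraint via an explicit Lagrangian/KKT computation, whereas you argue it directly from monotonicity of the objective (mentioning the Lagrangian variant as an aside), which is an equivalent and slightly more elementary presentation of the same argument.
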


\begin{proof}
We can have the optimization problem as:
\begin{align*}
\text{minimize:} & \sum m_t \\
\text{subject to:} & \frac{I(\vs_t)}{m_t-1} \le \epsilon
\end{align*}

Introduce the Lagrange multiplier $\lambda_t$ for each constraint:

$$
L(m_t, \lambda_t) = \sum m_t + \sum \lambda_t \left (\epsilon (m_t-1) - I(\vs_t)\right)
$$

Now, let's find the gradient of the Lagrangian with respect to $m_t$ and $\lambda_t$ and set them to zero:

\begin{align*}
\nabla_{m_t} L &= 1 + \epsilon \lambda_t = 0 \\
\nabla_{\lambda_t} L &= \epsilon (m_t-1) - I(\vs_t) = 0
\end{align*}

From the first equation, we get:

$$
\lambda_t = -\frac{1}{\epsilon}
$$

Substitute this value of $\lambda_t$ into the second equation:

$$
\epsilon (m_t-1) - I(\vs_t) = 0
$$

Solving for $m_t$, we get:

$$
m_t = \frac{I(\vs_t)}{\epsilon} + 1
$$

Thus, $m_t - 1$ is proportional to the node importance $I(\vs_t)$.
\end{proof}

\subsection{Importance-Based Adaptive Branching Under Gaussian Distribution}
\label{app:node_importance_gaussian} 

If we assume that \(v_{\phi}^{\pi}([\vs_t, \vo_t^{j}])\) and \(v_{\phi}^{\pi}([\vs_t, \vo_t^{i}])\) are independent and identically distributed Gaussian random variables:
\[
v_{\phi}^{\pi}([\vs_t, \vo_t^{j}]), v_{\phi}^{\pi}([\vs_t, \vo_t^{i}]) \sim \mathcal{N}(\mu, \sigma^2)
\]
The difference \(D_{ij} = v_{\phi}^{\pi}([\vs_t, \vo_t^{j}]) - v_{\phi}^{\pi}([\vs_t, \vo_t^{i}])\) will follow a normal distribution with:
\[
D_{ij} \sim \mathcal{N}(0, 2\sigma^2)
\]
To find the expected minimum absolute difference between \(v_{\phi}^{\pi}([\vs_t, \vo_t^{j}])\) and the closest \(v_{\phi}^{\pi}([\vs_t, \vo_t^{i}])\), we need to consider the distribution of the minimum of \(m_t\) Gaussian differences.

The expected minimum value of \(m_t\) absolute differences can be approximated using properties of order statistics for Gaussian distributions.

For a set of \(m_t\) independent normal random variables with variance \(2\sigma^2\), the expected minimum absolute difference, \(\mathbb{E}[\min_{i} |D_{ij}|]\), can be approximated by:
\[
E_{\phi}(t) \approx \frac{\sigma \sqrt{2}}{\sqrt{m_t}}
\]
This approximation arises from the fact that the expected minimum value of the absolute deviations of normally distributed random variables scales with the inverse of the square root of the number of samples.

Then, assume the range of the $m_t$ samples are $R_m=max(v_{\phi}^{\pi}([\vs_t, \vo_t^{i}])-min(v_{\phi}^{\pi}([\vs_t, \vo_t^{i}])$, the the expected range \( \mathbb{E}[R_m] \) of \( m_t \) samples from a normal distribution can be approximated using properties of extreme values of Gaussian distributions.
The range \( R_m \) can be approximated as:
\[
R_m \approx \sigma (z_{0.9995} - z_{0.0005})
\]
where \( z_{p} \) is the p-th percentile of the standard normal distribution. It can converge to 
\[
R_m \approx \sigma \sqrt{2 \ln(m_t)} \left( 2 - \frac{\ln(\ln(m_t))}{4 \ln(m_t)} \right)
\]
For simplicity, we can approximate the range using the primary term, which captures the dominant behavior:
\[
R_m \approx \sigma \sqrt{2 \ln(m_t)}
\]
Then we have 
\[
E_{\phi}(t) \approx \frac{\sqrt{2}}{{\sqrt{m_t}}}\frac{R_m}{\sqrt{2 \ln(m_t)}}
\]
Knowing that for all distributions, 
\[
I(\vs_t) \ge \frac{R_m}{2}
\]
We have 
\[
E_{\phi}(t) \le \frac{I(s_t)}{\sqrt{m_t\ln(m_t)}}
\]
Then to find the optimal $m_t$, the optimization problem is
\begin{align*}
\text{minimize:} & \sum m_t \\
\text{subject to:} & \frac{I(s_t)}{\sqrt{m_t\ln(m_t)}} \leq \epsilon
\end{align*}

To solve this optimization problem, we can first rewrite the constraint in terms of $m_t$.
\[
m_t\ln(m_t) \geq \frac{I^2(s_t)}{\epsilon^2}
\]

Now, let's define a new function $g(m_t) = m_t\ln(m_t)$. We want to find the minimum $m_t$ such that $g(m_t) \geq \frac{I^2(s_t)}{\epsilon^2}$. To do this, we can find the derivative of $g(m_t)$ and set it to zero to find the critical points.

\[
g'(m_t) = \frac{d}{dm_t}(m_t\ln(m_t)) = \ln(m_t) + 1
\]

Setting the derivative to zero:

\[
\ln(m_t) = -1
\]

\[
m_t = e^{-1}
\]

However, this critical point corresponds to a minimum of the function $g(m_t)$, and we are interested in the minimum $m_t$ that satisfies the constraint $g(m_t) \geq \frac{I^2(s_t)}{\epsilon^2}$. Since the function $g(m_t)$ is increasing for $m_t > e^{-1}$, we can find the minimum $m_t$ by setting $g(m_t) = \frac{I^2(s_t)}{\epsilon^2}$ and solving for $m_t$:

\[
m_t\ln(m_t) = \frac{I^2(s_t)}{\epsilon^2}
\]
This can not be solved directly, but we can still observe that there is a positive correlation between $m_t$ and $I(\vs_t)$.

\subsection{Prompt Templates}
\label{app:prompt}
\subsubsection{PRM}
\begin{tcolorbox}[label=prm_prompt]
\#\#\#You are given a math problem, followed by a step-by-step reasoning process. Your task is to read the problem carefully, understand the solving steps, and check the correctness of the last reasoning step. Output 'True' if the last step is correct, and 'False' otherwise.\textbackslash n\textbackslash n\#\#\# State\textbackslash n\{\texttt{state}\}\textbackslash n\textbackslash n\#\#\#Action\textbackslash n\{\texttt{option}\}\textbackslash n\textbackslash n\#\#\#Assessment\textbackslash n\{\texttt{textual reward}\}
\end{tcolorbox}

\subsubsection{ORM}
\begin{tcolorbox}
\#\#\#Assess a solution including final answer to a given math problem by following below steps.\textbackslash n- Evaluate the method used for solving the problem.\textbackslash n- Review each calculation step for accuracy. Check for computational errors, incorrect formula applications, or arithmetic mistakes.\textbackslash n- The solution should use all the information provided in the question.\textbackslash n- Examine the final answer for correctness, considering the calculations and method used.\textbackslash n.\textbackslash n\textbackslash n\#\#\# Prompt\textbackslash n\{\texttt{prompt}\}\textbackslash n\textbackslash n\#\#\#Trajectory\textbackslash n\{\texttt{trajectory}\}\textbackslash n\textbackslash n\#\#\#Assessment\textbackslash n\{\texttt{textual reward}\}
\end{tcolorbox}

\subsubsection{Policy Finetuning}

For MATH experiments that take a WizardMath V1.0 70B as the policy, we adopt their proposed system prompt for self-improving.
For GSM8K experiments taking Llama2 70B pretrain as the policy, we use the following system prompt.
\begin{tcolorbox}
A chat between a curious user and an artificial intelligence assistant.\textbackslash n
The assistant gives helpful, detailed, and polite answers to the user's questions.\textbackslash n
User: $\vx_i$\textbackslash n
Assistant: $\vy_i$
\end{tcolorbox}

\subsection{MCTS Details}
\label{app:implementation}
% We select LLaMA-2 70B as the policy model for the GSM8K dataset and Wizardmath 70B V1.0 for the MATH dataset. To construct the training dataset for the value function, \prm{} and \orm{}, we generate 50 trajectories for each prompt and construct the training target following Section~\ref{sec:critic}. Both \prm{} and \orm{} are initialized using the weights from the policy model. In the design of \orm{}, tool usage is not incorporated for GSM8K. However, for MATH, we enhance \orm{} by incorporating tools like pythoin sympy to assess the quality of a trajectory, in a manner similar to that described by \citet{gou2023tora}. The training employ a learning rate of 1e-6 and are trained for one epoch. For the fast rollout policy model, we opt for the Abel-002-7B model~\citep{abel} for both the GSM8K and MATH tasks for its high efficiency and superior performance.

We set the MCTS parameters in Table~\ref{tab:search_param}.

\renewcommand{\arraystretch}{1.0}
\begin{table*}[!t]
\centering
		\begin{tabular}{lc|cc|cc}
			\toprule
			\multirow{2}{*}{Method}  &&  \multicolumn{2}{c}{GSM8K} & \multicolumn{2}{c}{MATH} \cr
   \cmidrule(lr){3-4} \cmidrule(lr){5-6}

    & & \texttt{Small} & \texttt{Large} & \texttt{Small} & \texttt{Large} \cr
   
   \midrule
   $c$    && 1.0 & 1.5  & 1.0 & 1.0 \\
   $\alpha$    && 1.0 & 1.0  & 1.0 & 1.0 \\
   $c_\text{max}(0)$    && 60 & 60  & 60 & 60 \\
   $c_\text{max}(t)$ where $t>0$   && 10 & 10  & 10 & 10 \\
   $c_\text{min}(0)$    && 10 & 40  & 10 & 20 \\
   $c_\text{min}(t)$ where $t>0$    && 2 & 2  & 3 & 3 \\
   % Termination    && new line & new line  & termination func & termination func \\
			\bottomrule  
		\end{tabular}

 \caption{Parameters for MCTS. The Small/Large means small \#rollout and small \#rollout }
	\label{tab:search_param}
 
\end{table*}

% We set the MCTS parameters as follows: in GSM8K, $c=1$ for the small scale (\texttt{\#rollout}) and $1.5$ for the large scale, with $\alpha=1$. For $t=0$, $c_\text{min}(0)=10$ for the small scale and $40$ for the large scale, while for the rest of $t$, $c_\text{min}(t)=2$. We also set $c_\text{max}(0) = 10$ for the small scale and $40$ for the large scale, and for the remaining $t$, $c_\text{max}(t)=10$. The termination condition is based on sentence termination. In MATH, the parameters are $c=1$, $\alpha=1$, and for $t=0$, $c_\text{min}(0)=10$ for the small scale and $20$ for the large scale, while for the rest of $t$, $c_\text{min}(t)=3$. We set $c_\text{max}(0) = 10$ for the small scale and $20$ for the large scale, and for the remaining $t$, $c_\text{max}(t)=10$. The termination function is rule-based, checking if there are any formulations or calculations in the sentence. If there are, the option is terminated; otherwise, the option continues to extend.
% For policy self-improving (\S \ref{sec:self_improve}), we train the policy model up to 3 epochs, setting batch size to 128, learning rate to $5\times 10^{-6}$ and minimal learning rate to $1\times 10^{-6}$.
% Linear warm-up and decay is used with warm-up percent to be 10\%.
% We perform early stopping based on a devset held out from the training instances.
% For second-round self-improving, we synthesize 7.9k prompts\citep{yu2023metamath} to obtain the corresponding MCTS outputs for training.

\subsection{Additional Ablations}
\label{app:add_ablations}
\paragraph{Fast-rollout model} Using Llama-2-70b instead of Abel-7B-002 improves performance by reducing bias from a smaller model, but Abel-002-7B is faster with similar computational resources due to higher concurrency and quicker processing. The details can be found in Table~\ref{table:ablation_fr}.
\begin{table}[!htb]
    \centering
    \begin{tabular}{l|c|c}
        Model               & Acc (\%) & Speed (s) \\
        \hline
        Abel-002-7B         & 87.0     & 16.8      \\
        Llama-2-70B         & 87.3     & 38.1      \\
        \hline
    \end{tabular}
    \vspace{4mm}
    \caption{Ablation study over different fast-rollout models on GSM8K.}
    \label{table:ablation_fr}
\end{table}

\subsection{Search Comparison}
\label{app:search_comparison}

{
\renewcommand{\arraystretch}{1.0}
\begin{table*}[!t]
\centering
% \scalebox{1.0f}{    
	% \setlength\tabcolsep{3pt}
	% \begin{threeparttable}
		% \fontsize{9}{9}
		% \selectfont
		\begin{tabular}{lc|cc|cc}
			\toprule
			\multirow{2}{*}{Method} & \multirow{2}{*}{\#Responses} &  \multicolumn{2}{c}{GSM8K} & \multicolumn{2}{c}{MATH} \cr
   \cmidrule(lr){3-4} \cmidrule(lr){5-6}

    & & \texttt{\#Rollouts} & \texttt{Accuracy} & \texttt{\#Rollouts} & \texttt{Accuracy} \cr
   
   \midrule
   Greedy                         & 1  & 4.6 & 57.8  & 9.9 & 20.7 \\
\midrule    
   \multirow{3}{*}{Self-consistency} & 10 & 46  & 67.4    &  99   & 22.5 \\
& 30 & 137 & 74.2    &  299  & 27.3 \\
& 50 & 229 & 75.4   &  499  & 28.8 \\
\midrule
  \multirow{3}{*}{Re-ranking}       & 10 & 46  & 80.8 &    99   &  34.1 \\
                                          & 30 & 137 & 86.3 &  299  &  39.0 \\
                                          & 50 & 229 & 87.7 &    499  &  42.0 \\
\midrule
\multirow{2}{*}{\emcts{}}             & - & 55   & 87.0 &   223  & 45.4 \\
                                          & - & 230  & 88.9 &   341  & 48.7 \\
    
			\bottomrule  
		\end{tabular}
	% \end{threeparttable}

	% \caption{MCTS results over GSM8K and MATH test sets. We use LLaMA-2 70B and WizardMath 70B V1.0 as our base models on GSM8K and MATH data sets respectively.*: we test WizardMath 70B V1.0 model with~\protect\hyperlink{https://github.com/FastEval/FastEval}{FastEval} script, which is also used for all our methods in order to have an apple to apple comparison.}
 \caption{Comparative results of various searching method on GSM8K and MATH.}
	\label{table:search_comparison}
 
\end{table*}
}

% Best-of-1 accuracy: 15.09%,  n_rollout 24
% Best-of-5 accuracy: 24.74%,  n_rollout 124
% Best-of-10 accuracy: 26.00%,  n_rollout 249
% Best-of-20 accuracy: 26.84%,  n_rollout 499
% Best-of-30 accuracy: 27.30%,  n_rollout 749

% Analysis: 1. acc. vs \# of rollouts
Table~\ref{table:search_comparison} presents the performance of various methods applied to different number of responses, from 10 to 50. Our analysis confirms several key findings: 1) Reranking utilizing \orm{} consistently outperforms self-consistency techniques, indicating that \orm{} is capable of generating meaningful signals for searching. 2) \emcts{} demonstrates superior performance while requiring significantly fewer rollouts. For instance, on the MATH dataset, \emcts{} achieves better results with only half the number of rollouts compared to reranking. Additionally, we evaluated the performance of BFS on the GSM8K only, where it requires 87.9 rollouts to achieve a score of 80.6. These results suggest that our design of an efficient MCTS in \model{} can serve as an effective policy improvement operation, enabling the search for high-quality trajectories with reduced computational cost.

\subsection{Rollout Example}
\label{app:rollout_example}
Consider the following GSM-like question:
\begin{tcolorbox}
Question: Sandy's monthly phone bill expense is equal to ten times her age now. In two years, Sandy will be three times as old as Kim. If Kim is currently x years old, calculate Sandy's monthly phone bill expense.\textbackslash nIf we know the answer to the above question is 340, what is the value of the unknown variable x?\textbackslash n 
\end{tcolorbox}
A node in the second layer could have the following content:
\begin{tcolorbox}
Answer: We know that Sandy's monthly phone bill is 10 times her age. In two years, Sandy will be 3 times as old as Kim. The sum of Sandy's age now and 2 years is 3 times the sum of Kim's age now and two years.\textbackslash nSandy's age now is 340/10 = <<340/10=34>>34. In two years, Sandy's age will be 34 + 2 = <<34+2=36>>36.\textbackslash n
\end{tcolorbox}
The parent of this node has the content:
\begin{tcolorbox}
Answer: We know that Sandy's monthly phone bill is 10 times her age. In two years, Sandy will be 3 times as old as Kim. The sum of Sandy's age now and 2 years is 3 times the sum of Kim's age now and two years.\textbackslash n
\end{tcolorbox}
And one of its fast-rollout paths could be:
\begin{tcolorbox}
The sum of Sandy's age now and 2 years is 36. The sum of Kim's age now and two years is x + 2.\textbackslash n36 = 3(x + 2)\textbackslash n6 = 3x + 6\textbackslash n3x = 30\textbackslash nx = 10\textbackslash n \#\#\#\# 10
\end{tcolorbox}

\subsection{Critic Performance}
\label{app:critic_performance}

We evaluated the performance of the value function and \prm{} on the GSM8K test set. Table~\ref{table:ablation_critic} presents a comparison of these models in terms of precision, recall, and Expected Calibration Error (ECE). Results indicate that the value function achieves higher precision and better calibration, while \prm{} demonstrates a superior recall.

\begin{table}[!htb]
    \centering
    \begin{tabular}{l|c|c|c}
        Model               & Precision & Recall & ECE \\
        \hline
        Value Function      & 0.82      & 0.79   & 0.032 \\
        \prm{}              & 0.62      & 0.90   & 0.375 \\
        \hline
    \end{tabular}
    \vspace{4mm}
    \caption{Performance comparison of the Value Function model and \prm{} on the GSM8K test set.}
    \label{table:ablation_critic}
\end{table}

\subsection{Compute Resources}
\label{app:compute_resources}
Our experiments were conducted using NVIDIA A100 40GB GPUs. Serving models based on Llama-2-70B or WizardMath-70B required 4 GPUs, while serving Llama-2-7B and Abel-002-7B was possible on a single GPU. Training the 70B models required 64 GPUs.
\subsection{Limitations and Future Work}
Despite the promising results demonstrated by \model{} in this study, there are several limitations that requires further exploration. (\RN{1}) Our current implementation employs relatively simple methods for generating synthetic prompts. Future iterations of \model{} should explore advanced techniques, such as Self-Instruct, to create both diverse and model capability-awared prompts. (\RN{2}) Although \model{} demonstrates improvements over base models, its performance in greedy sampling is substantially inferior to that observed when decoded with \emcts{}. This indicates that the full potential of MCTS for self-improvement in LLMs has not yet been fully realized. Two potential factors contributing to this issue have been identified: a) the self-improvement loop may not be leveraging sufficient data; and b) the base model may be limited in its capacity for rapid learning. Addressing these concerns could lead to more significant improvemens. (\RN{3}) In our existing framework, the critic models remain static. We will explore mechanisms to continually update critic models to adapt to new policy models. This will help ensure the discriminator-generator gap and improve the overall training dynamics. (\RN{4}) The evaluation of \model{} has been limited to mathematical reasoning tasks. To verify the generalizability and broader applicability of the framework, future research will need to extend its application to other domains.

\newpage
\section*{NeurIPS Paper Checklist}
\label{sec:check_list}

\begin{enumerate}

\item {\bf Claims}
    \item[] Question: Do the main claims made in the abstract and introduction accurately reflect the paper's contributions and scope?
    \item[] Answer: \answerYes{} % Replace by \answerYes{}, \answerNo{}, or \answerNA{}.
    \item[] Justification: Yes the claims are accurately made.
    \item[] Guidelines:
    \begin{itemize}
        \item The answer NA means that the abstract and introduction do not include the claims made in the paper.
        \item The abstract and/or introduction should clearly state the claims made, including the contributions made in the paper and important assumptions and limitations. A No or NA answer to this question will not be perceived well by the reviewers. 
        \item The claims made should match theoretical and experimental results, and reflect how much the results can be expected to generalize to other settings. 
        \item It is fine to include aspirational goals as motivation as long as it is clear that these goals are not attained by the paper. 
    \end{itemize}

\item {\bf Limitations}
    \item[] Question: Does the paper discuss the limitations of the work performed by the authors?
    \item[] Answer: \answerYes{} % Replace by \answerYes{}, \answerNo{}, or \answerNA{}.
    \item[] Justification: Yes we discussed the limitations in Appendix.
    \item[] Guidelines:
    \begin{itemize}
        \item The answer NA means that the paper has no limitation while the answer No means that the paper has limitations, but those are not discussed in the paper. 
        \item The authors are encouraged to create a separate "Limitations" section in their paper.
        \item The paper should point out any strong assumptions and how robust the results are to violations of these assumptions (e.g., independence assumptions, noiseless settings, model well-specification, asymptotic approximations only holding locally). The authors should reflect on how these assumptions might be violated in practice and what the implications would be.
        \item The authors should reflect on the scope of the claims made, e.g., if the approach was only tested on a few datasets or with a few runs. In general, empirical results often depend on implicit assumptions, which should be articulated.
        \item The authors should reflect on the factors that influence the performance of the approach. For example, a facial recognition algorithm may perform poorly when image resolution is low or images are taken in low lighting. Or a speech-to-text system might not be used reliably to provide closed captions for online lectures because it fails to handle technical jargon.
        \item The authors should discuss the computational efficiency of the proposed algorithms and how they scale with dataset size.
        \item If applicable, the authors should discuss possible limitations of their approach to address problems of privacy and fairness.
        \item While the authors might fear that complete honesty about limitations might be used by reviewers as grounds for rejection, a worse outcome might be that reviewers discover limitations that aren't acknowledged in the paper. The authors should use their best judgment and recognize that individual actions in favor of transparency play an important role in developing norms that preserve the integrity of the community. Reviewers will be specifically instructed to not penalize honesty concerning limitations.
    \end{itemize}

\item {\bf Theory Assumptions and Proofs}
    \item[] Question: For each theoretical result, does the paper provide the full set of assumptions and a complete (and correct) proof?
    \item[] Answer: \answerYes{} % Replace by \answerYes{}, \answerNo{}, or \answerNA{}.
    \item[] Justification: We provide the assumptions and proofs for the Theorem 4.1. and other theoretical results. 
    \item[] Guidelines:
    \begin{itemize}
        \item The answer NA means that the paper does not include theoretical results. 
        \item All the theorems, formulas, and proofs in the paper should be numbered and cross-referenced.
        \item All assumptions should be clearly stated or referenced in the statement of any theorems.
        \item The proofs can either appear in the main paper or the supplemental material, but if they appear in the supplemental material, the authors are encouraged to provide a short proof sketch to provide intuition. 
        \item Inversely, any informal proof provided in the core of the paper should be complemented by formal proofs provided in appendix or supplemental material.
        \item Theorems and Lemmas that the proof relies upon should be properly referenced. 
    \end{itemize}

    \item {\bf Experimental Result Reproducibility}
    \item[] Question: Does the paper fully disclose all the information needed to reproduce the main experimental results of the paper to the extent that it affects the main claims and/or conclusions of the paper (regardless of whether the code and data are provided or not)?
    \item[] Answer: \answerYes{} % Replace by \answerYes{}, \answerNo{}, or \answerNA{}.
    \item[] Justification: We provided the hyoerparameters to reproduce the results.
    \item[] Guidelines:
    \begin{itemize}
        \item The answer NA means that the paper does not include experiments.
        \item If the paper includes experiments, a No answer to this question will not be perceived well by the reviewers: Making the paper reproducible is important, regardless of whether the code and data are provided or not.
        \item If the contribution is a dataset and/or model, the authors should describe the steps taken to make their results reproducible or verifiable. 
        \item Depending on the contribution, reproducibility can be accomplished in various ways. For example, if the contribution is a novel architecture, describing the architecture fully might suffice, or if the contribution is a specific model and empirical evaluation, it may be necessary to either make it possible for others to replicate the model with the same dataset, or provide access to the model. In general. releasing code and data is often one good way to accomplish this, but reproducibility can also be provided via detailed instructions for how to replicate the results, access to a hosted model (e.g., in the case of a large language model), releasing of a model checkpoint, or other means that are appropriate to the research performed.
        \item While NeurIPS does not require releasing code, the conference does require all submissions to provide some reasonable avenue for reproducibility, which may depend on the nature of the contribution. For example
        \begin{enumerate}
            \item If the contribution is primarily a new algorithm, the paper should make it clear how to reproduce that algorithm.
            \item If the contribution is primarily a new model architecture, the paper should describe the architecture clearly and fully.
            \item If the contribution is a new model (e.g., a large language model), then there should either be a way to access this model for reproducing the results or a way to reproduce the model (e.g., with an open-source dataset or instructions for how to construct the dataset).
            \item We recognize that reproducibility may be tricky in some cases, in which case authors are welcome to describe the particular way they provide for reproducibility. In the case of closed-source models, it may be that access to the model is limited in some way (e.g., to registered users), but it should be possible for other researchers to have some path to reproducing or verifying the results.
        \end{enumerate}
    \end{itemize}

\item {\bf Open access to data and code}
    \item[] Question: Does the paper provide open access to the data and code, with sufficient instructions to faithfully reproduce the main experimental results, as described in supplemental material?
    \item[] Answer: \answerYes{}{} % Replace by \answerYes{}, \answerNo{}, or \answerNA{}.
    \item[] Justification: The code is available at https://github.com/YeTianJHU/AlphaLLM.
    \item[] Guidelines:
    \begin{itemize}
        \item The answer NA means that paper does not include experiments requiring code.
        \item Please see the NeurIPS code and data submission guidelines (\url{https://nips.cc/public/guides/CodeSubmissionPolicy}) for more details.
        \item While we encourage the release of code and data, we understand that this might not be possible, so “No” is an acceptable answer. Papers cannot be rejected simply for not including code, unless this is central to the contribution (e.g., for a new open-source benchmark).
        \item The instructions should contain the exact command and environment needed to run to reproduce the results. See the NeurIPS code and data submission guidelines (\url{https://nips.cc/public/guides/CodeSubmissionPolicy}) for more details.
        \item The authors should provide instructions on data access and preparation, including how to access the raw data, preprocessed data, intermediate data, and generated data, etc.
        \item The authors should provide scripts to reproduce all experimental results for the new proposed method and baselines. If only a subset of experiments are reproducible, they should state which ones are omitted from the script and why.
        \item At submission time, to preserve anonymity, the authors should release anonymized versions (if applicable).
        \item Providing as much information as possible in supplemental material (appended to the paper) is recommended, but including URLs to data and code is permitted.
    \end{itemize}

\item {\bf Experimental Setting/Details}
    \item[] Question: Does the paper specify all the training and test details (e.g., data splits, hyperparameters, how they were chosen, type of optimizer, etc.) necessary to understand the results?
    \item[] Answer: \answerYes{} % Replace by \answerYes{}, \answerNo{}, or \answerNA{}.
    \item[] Justification: Yes training and test details are mentioned. 
    \item[] Guidelines:
    \begin{itemize}
        \item The answer NA means that the paper does not include experiments.
        \item The experimental setting should be presented in the core of the paper to a level of detail that is necessary to appreciate the results and make sense of them.
        \item The full details can be provided either with the code, in appendix, or as supplemental material.
    \end{itemize}

\item {\bf Experiment Statistical Significance}
    \item[] Question: Does the paper report error bars suitably and correctly defined or other appropriate information about the statistical significance of the experiments?
    \item[] Answer: \answerNo{} % Replace by \answerYes{}, \answerNo{}, or \answerNA{}.
    \item[] Justification: Error bars are not included in our experiment results due to the high computational cost.
    \item[] Guidelines:
    \begin{itemize}
        \item The answer NA means that the paper does not include experiments.
        \item The authors should answer "Yes" if the results are accompanied by error bars, confidence intervals, or statistical significance tests, at least for the experiments that support the main claims of the paper.
        \item The factors of variability that the error bars are capturing should be clearly stated (for example, train/test split, initialization, random drawing of some parameter, or overall run with given experimental conditions).
        \item The method for calculating the error bars should be explained (closed form formula, call to a library function, bootstrap, etc.)
        \item The assumptions made should be given (e.g., Normally distributed errors).
        \item It should be clear whether the error bar is the standard deviation or the standard error of the mean.
        \item It is OK to report 1-sigma error bars, but one should state it. The authors should preferably report a 2-sigma error bar than state that they have a 96\% CI, if the hypothesis of Normality of errors is not verified.
        \item For asymmetric distributions, the authors should be careful not to show in tables or figures symmetric error bars that would yield results that are out of range (e.g. negative error rates).
        \item If error bars are reported in tables or plots, The authors should explain in the text how they were calculated and reference the corresponding figures or tables in the text.
    \end{itemize}

\item {\bf Experiments Compute Resources}
    \item[] Question: For each experiment, does the paper provide sufficient information on the computer resources (type of compute workers, memory, time of execution) needed to reproduce the experiments?
    \item[] Answer: \answerYes{} % Replace by \answerYes{}, \answerNo{}, or \answerNA{}.
    \item[] Justification: We provide the information of the compute resources we used in the Appendix.
    \item[] Guidelines:
    \begin{itemize}
        \item The answer NA means that the paper does not include experiments.
        \item The paper should indicate the type of compute workers CPU or GPU, internal cluster, or cloud provider, including relevant memory and storage.
        \item The paper should provide the amount of compute required for each of the individual experimental runs as well as estimate the total compute. 
        \item The paper should disclose whether the full research project required more compute than the experiments reported in the paper (e.g., preliminary or failed experiments that didn't make it into the paper). 
    \end{itemize}
    
\item {\bf Code Of Ethics}
    \item[] Question: Does the research conducted in the paper conform, in every respect, with the NeurIPS Code of Ethics \url{https://neurips.cc/public/EthicsGuidelines}?
    \item[] Answer: \answerYes{} % Replace by \answerYes{}, \answerNo{}, or \answerNA{}.
    \item[] Justification: Yes the research conform NeurIPS Code of Ethics.
    \item[] Guidelines:
    \begin{itemize}
        \item The answer NA means that the authors have not reviewed the NeurIPS Code of Ethics.
        \item If the authors answer No, they should explain the special circumstances that require a deviation from the Code of Ethics.
        \item The authors should make sure to preserve anonymity (e.g., if there is a special consideration due to laws or regulations in their jurisdiction).
    \end{itemize}

\item {\bf Broader Impacts}
    \item[] Question: Does the paper discuss both potential positive societal impacts and negative societal impacts of the work performed?
    \item[] Answer: \answerNA{} % Replace by \answerYes{}, \answerNo{}, or \answerNA{}.
    \item[] Justification: This work primarily focuses on foundational research in algorithm improvement and, as such, does not have a direct societal impact.
    \item[] Guidelines:
    \begin{itemize}
        \item The answer NA means that there is no societal impact of the work performed.
        \item If the authors answer NA or No, they should explain why their work has no societal impact or why the paper does not address societal impact.
        \item Examples of negative societal impacts include potential malicious or unintended uses (e.g., disinformation, generating fake profiles, surveillance), fairness considerations (e.g., deployment of technologies that could make decisions that unfairly impact specific groups), privacy considerations, and security considerations.
        \item The conference expects that many papers will be foundational research and not tied to particular applications, let alone deployments. However, if there is a direct path to any negative applications, the authors should point it out. For example, it is legitimate to point out that an improvement in the quality of generative models could be used to generate deepfakes for disinformation. On the other hand, it is not needed to point out that a generic algorithm for optimizing neural networks could enable people to train models that generate Deepfakes faster.
        \item The authors should consider possible harms that could arise when the technology is being used as intended and functioning correctly, harms that could arise when the technology is being used as intended but gives incorrect results, and harms following from (intentional or unintentional) misuse of the technology.
        \item If there are negative societal impacts, the authors could also discuss possible mitigation strategies (e.g., gated release of models, providing defenses in addition to attacks, mechanisms for monitoring misuse, mechanisms to monitor how a system learns from feedback over time, improving the efficiency and accessibility of ML).
    \end{itemize}
    
\item {\bf Safeguards}
    \item[] Question: Does the paper describe safeguards that have been put in place for responsible release of data or models that have a high risk for misuse (e.g., pretrained language models, image generators, or scraped datasets)?
    \item[] Answer: \answerNA{} % Replace by \answerYes{}, \answerNo{}, or \answerNA{}.
    \item[] Justification: The paper has no such risks.
    \item[] Guidelines:
    \begin{itemize}
        \item The answer NA means that the paper poses no such risks.
        \item Released models that have a high risk for misuse or dual-use should be released with necessary safeguards to allow for controlled use of the model, for example by requiring that users adhere to usage guidelines or restrictions to access the model or implementing safety filters. 
        \item Datasets that have been scraped from the Internet could pose safety risks. The authors should describe how they avoided releasing unsafe images.
        \item We recognize that providing effective safeguards is challenging, and many papers do not require this, but we encourage authors to take this into account and make a best faith effort.
    \end{itemize}

\item {\bf Licenses for existing assets}
    \item[] Question: Are the creators or original owners of assets (e.g., code, data, models), used in the paper, properly credited and are the license and terms of use explicitly mentioned and properly respected?
    \item[] Answer: \answerYes{} % Replace by \answerYes{}, \answerNo{}, or \answerNA{}.
    \item[] Justification: The datasets and models used in this paper are properly cited.
    \item[] Guidelines:
    \begin{itemize}
        \item The answer NA means that the paper does not use existing assets.
        \item The authors should cite the original paper that produced the code package or dataset.
        \item The authors should state which version of the asset is used and, if possible, include a URL.
        \item The name of the license (e.g., CC-BY 4.0) should be included for each asset.
        \item For scraped data from a particular source (e.g., website), the copyright and terms of service of that source should be provided.
        \item If assets are released, the license, copyright information, and terms of use in the package should be provided. For popular datasets, \url{paperswithcode.com/datasets} has curated licenses for some datasets. Their licensing guide can help determine the license of a dataset.
        \item For existing datasets that are re-packaged, both the original license and the license of the derived asset (if it has changed) should be provided.
        \item If this information is not available online, the authors are encouraged to reach out to the asset's creators.
    \end{itemize}

\item {\bf New Assets}
    \item[] Question: Are new assets introduced in the paper well documented and is the documentation provided alongside the assets?
    \item[] Answer: \answerNA{} % Replace by \answerYes{}, \answerNo{}, or \answerNA{}.
    \item[] Justification: We didn't release new assets.
    \item[] Guidelines:
    \begin{itemize}
        \item The answer NA means that the paper does not release new assets.
        \item Researchers should communicate the details of the dataset/code/model as part of their submissions via structured templates. This includes details about training, license, limitations, etc. 
        \item The paper should discuss whether and how consent was obtained from people whose asset is used.
        \item At submission time, remember to anonymize your assets (if applicable). You can either create an anonymized URL or include an anonymized zip file.
    \end{itemize}

\item {\bf Crowdsourcing and Research with Human Subjects}
    \item[] Question: For crowdsourcing experiments and research with human subjects, does the paper include the full text of instructions given to participants and screenshots, if applicable, as well as details about compensation (if any)? 
    \item[] Answer: \answerNA{} % Replace by \answerYes{}, \answerNo{}, or \answerNA{}.
    \item[] Justification: This paper does not involve crowdsourcing nor research with human subjects.
    \item[] Guidelines:
    \begin{itemize}
        \item The answer NA means that the paper does not involve crowdsourcing nor research with human subjects.
        \item Including this information in the supplemental material is fine, but if the main contribution of the paper involves human subjects, then as much detail as possible should be included in the main paper. 
        \item According to the NeurIPS Code of Ethics, workers involved in data collection, curation, or other labor should be paid at least the minimum wage in the country of the data collector. 
    \end{itemize}

\item {\bf Institutional Review Board (IRB) Approvals or Equivalent for Research with Human Subjects}
    \item[] Question: Does the paper describe potential risks incurred by study participants, whether such risks were disclosed to the subjects, and whether Institutional Review Board (IRB) approvals (or an equivalent approval/review based on the requirements of your country or institution) were obtained?
    \item[] Answer: \answerNA{} % Replace by \answerYes{}, \answerNo{}, or \answerNA{}.
    \item[] Justification: This paper does not involve crowdsourcing nor research with human subjects.
    \item[] Guidelines:
    \begin{itemize}
        \item The answer NA means that the paper does not involve crowdsourcing nor research with human subjects.
        \item Depending on the country in which research is conducted, IRB approval (or equivalent) may be required for any human subjects research. If you obtained IRB approval, you should clearly state this in the paper. 
        \item We recognize that the procedures for this may vary significantly between institutions and locations, and we expect authors to adhere to the NeurIPS Code of Ethics and the guidelines for their institution. 
        \item For initial submissions, do not include any information that would break anonymity (if applicable), such as the institution conducting the review.
    \end{itemize}

\end{enumerate}

\end{document}